\newtcolorbox{promptbox}[1][]{
  enhanced,
  breakable,
  colback=blue!3,
  colframe=black!70,
  coltitle=white,
  fonttitle=\bfseries\small,
  attach boxed title to top left={yshift=-2mm, xshift=4mm},
  boxed title style={colback=black!70, arc=1pt},
  left=6pt,
  right=6pt,
  top=8pt,
  bottom=6pt,
  boxrule=1pt,
  arc=3pt,
  #1
}
\theoremstyle{plain}
\newtheorem{theorem}{Theorem}[section]
\theoremstyle{definition}
\newtheorem{assumption}[theorem]{Assumption}
\theoremstyle{remark}
\newcommand{\E}{\mathbb{E}}
\DeclareMathOperator{\Var}{Var}
\icmltitlerunning{Just-In-Time Reinforcement Learning: Continual Learning in LLM Agents Without Gradient Updates}
\begin{document}
    
\twocolumn[
\icmltitle{Just-In-Time Reinforcement Learning: Continual Learning in LLM Agents Without Gradient Updates}




\begin{icmlauthorlist}
\icmlauthor{Yibo Li}{nus}
\icmlauthor{Zijie Lin}{nus}
\icmlauthor{Ailin Deng}{nus}
\icmlauthor{Xuan Zhang}{nus}
\icmlauthor{Yufei He}{nus}
\icmlauthor{Shuo Ji}{nus}
\icmlauthor{Tri Cao}{nus}
\icmlauthor{Bryan Hooi}{nus}

\end{icmlauthorlist}

\icmlaffiliation{nus}{National University of Singapore, Singapore}

\icmlcorrespondingauthor{Shuo Ji}{jishuo@u.nus.edu}
\icmlkeywords{Machine Learning, ICML}

\vskip 0.3in
]



\printAffiliationsAndNotice{} 

\begin{abstract}
While Large Language Model (LLM) agents excel at general tasks, they inherently struggle with continual adaptation due to the frozen weights after deployment. Conventional reinforcement learning (RL) offers a solution but incurs prohibitive computational costs and the risk of catastrophic forgetting. 
    We introduce \textbf{Just-In-Time Reinforcement Learning (JitRL)}, a training-free framework that enables test-time policy optimization without any gradient updates. 
    JitRL maintains a dynamic, non-parametric memory of experiences and retrieves relevant trajectories to estimate action advantages on-the-fly. 
    These estimates are then used to directly modulate the LLM's output logits. 
    We theoretically prove that this additive update rule is the exact closed-form solution to the KL-constrained policy optimization objective. 
    Extensive experiments on WebArena and Jericho demonstrate that JitRL establishes a new state-of-the-art among training-free methods. 
    Crucially, JitRL outperforms the performance of computationally expensive fine-tuning methods (e.g., WebRL) while reducing monetary costs by over 30$\times$, offering a scalable path for continual learning agents. The code is available at \href{https://github.com/liushiliushi/JitRL}{https://github.com/liushiliushi/JitRL}.
\end{abstract}

\section{Introduction}

\begin{figure}[t]
    \centering
    \includegraphics[width=\linewidth]{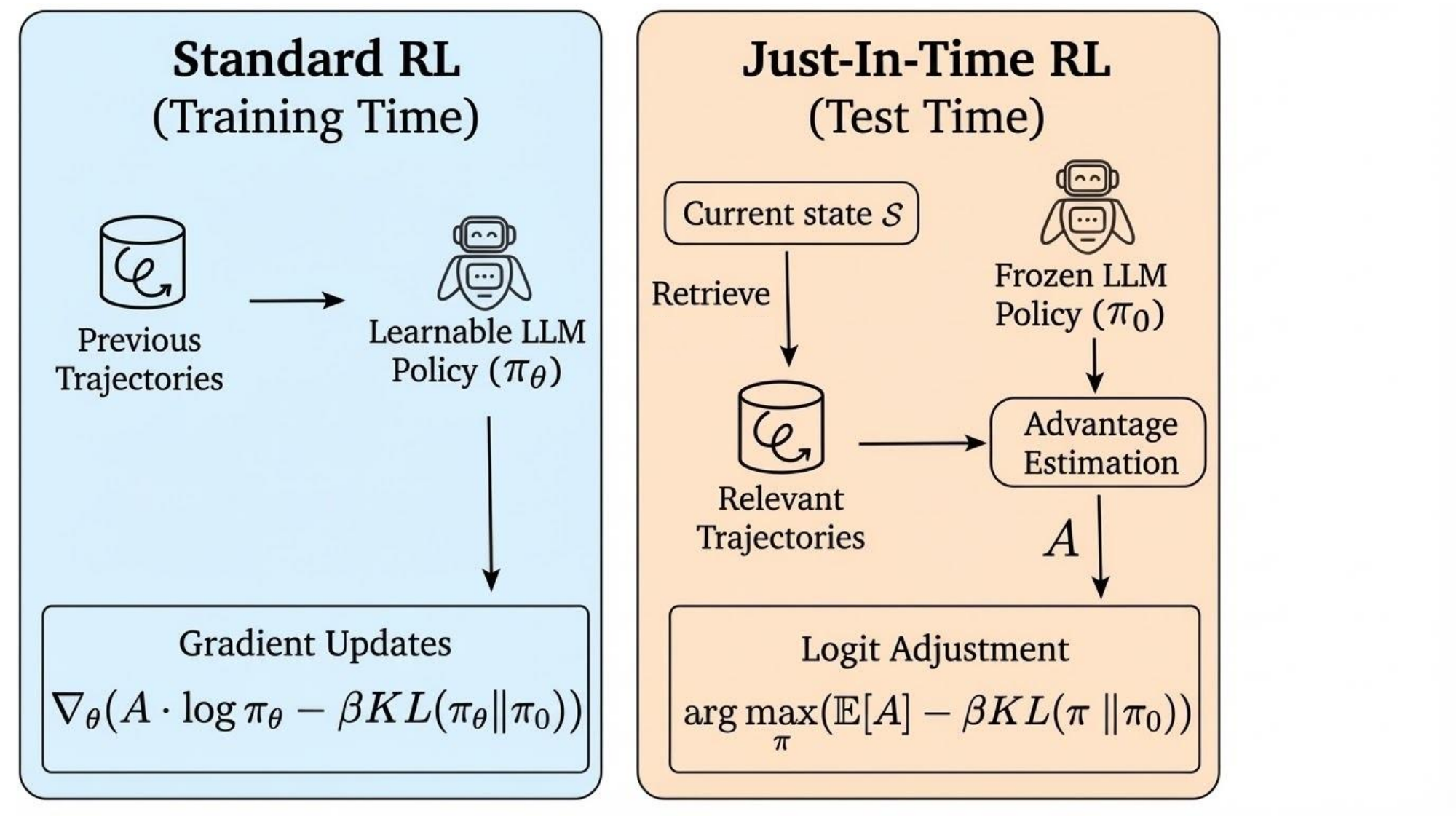}
    \caption{While standard RL performs policy gradient updates during training using previous trajectories, JitRL operates at test time. Specifically, it retrieves trajectories relevant to the current state to estimate advantages $A$, subsequently refining the output logits through a KL-regularized policy optimization objective.}
    \label{fig:intro}
\end{figure}

While humans can learn ``on the fly'', current AI agents fundamentally lack this ability because their weights are frozen after training \citep{self-evolving, self-evolving2, conftuner}. This limitation severely restricts their practicality: when deployed in unfamiliar or dynamic settings, AI agents are analogous to newly hired employees without on-the-job training, which often repeatedly make the same errors due to their failure to learn, whereas humans can adapt to their environment, including by learning from their mistakes. Highlighting the importance of this issue, \citet{hendrycks2025definition} proposes an “AGI Score” assessing progress toward artificial general intelligence and finds that current AI systems are most deficient in their “capability to continually learn new information”.

One potential solution is conventional reinforcement learning (RL) \citep{webrl,toolrl,torl}, performed in a continually policy gradient updating manner. However, this approach requires a large amount of training data to perform well. It is also computationally expensive, due to the high cost of RL and the need for frequent updates to keep the model up to date. In addition, it is prone to catastrophic forgetting, degrading the model’s previously learned capabilities \citep{catastrophic}. Prior work finds that conventional RL yields only limited improvements when evaluated in continual adaptation settings \cite{evotest}.

An alternative approach to this problem relies on in-context learning (ICL) \citep{icl} to enable the agent to learn at test time \citep{reflexion,awm}. These methods rely on prompt or context engineering to incorporate information from past experiences. However, such approaches tend to struggle as the required context length grows, especially since agentic tasks often involve long sequences of interactions \cite{lost-in-the-middle}. Crucially, ICL lacks the generality of RL. While prompts are confined to explicit textual descriptions, RL optimizes policies through rewards, enabling the mastery of complex skills that are difficult to articulate in text.

In light of these limitations, a fundamental question arises: \textit{Can we enable agents to learn continually in a flexible way, without expensive parameter updates?} Our approach, Just-In-Time Reinforcement Learning (JitRL), enables general-purpose learning by adopting an RL formulation, while maintaining efficiency by avoiding gradient updates. Instead of gradient updates, JitRL maintains a dynamic memory bank that stores trajectories as \texttt{<state, action, reward>} triplets, as illustrated in Figure ~\ref{fig:intro}. Then, given the agent's current state, it retrieves relevant trajectories from the memory, and learns from them ``just in time.'' To do so, it uses these trajectories to estimate the advantage of each action in the current state -- i.e., how much better each action is relative to the average. These advantage estimates are then used to adjust the model's output logits. Crucially, we theoretically prove that our update rule is the exact closed-form solution to the policy optimization objective under a KL constraint. In this way, JitRL enables continual self-evolution without the prohibitive costs of gradient-based training.

We empirically validate JitRL on two benchmarks: WebArena \citep{webarena} for realistic web navigation and Jericho \citep{jericho} for long-horizon text-based games. Extensive experiments demonstrate that JitRL sets a new state-of-the-art,  outperforming training-free baselines and weight-update methods, while reducing the monetary costs of conventional RL by over 30×. Furthermore, our results highlight JitRL's robustness, showing consistent gains across diverse LLM backbones and effective generalization to unseen tasks. 

\section{Related Work}
\textbf{Reinforcement Learning.} 
Reinforcement learning (RL) algorithms like PPO~\citep{ppo} and GRPO~\citep{grpo} have effectively aligned LLM agents for complex tasks. This paradigm spans diverse domains, ranging from optimizing search queries in information retrieval~\citep{search-r1, r1-searcher} to refining tool execution~\citep{torl, toolrl}. However, these gradient-based methods suffer from prohibitive computational costs and yield static models, fundamentally limiting their adaptability to distribution shifts.

\textbf{Training-Free Inference Enhancement.} 
Recent works have integrated external memory modules to enhance agents directly at test time. Systems like MemGPT and Generative Agents~\citep{memgpt, generative-agent} utilize hierarchical memory to store historical interactions, while frameworks like Voyager and Reflexion~\citep{voyager, reflexion} retrieve textual descriptions of past skills or failures to improve future performance. Similarly, A-mem~\citep{amem} constructs interconnected knowledge networks through dynamic indexing. Rather than merely retrieving text for in-context learning, our method treats memory as a non-parametric policy distribution. We perform soft updates directly on the LLM's logits, effectively achieving policy improvement without the overhead of parameter updates.








\section{Preliminaries}
\label{subsec:preliminaries}

Reinforcement Learning (RL) aims to optimize a policy $\pi_{\theta}$, which maps a given state to a probability distribution over actions, to maximize the expected cumulative reward. The objective function is defined as $J(\theta) = \mathbb{E}_{\tau \sim \pi_{\theta}}[R(\tau)]$, where $\tau = (s_0, a_0, r_0, s_1,a_1,r_1\dots, s_n, a_n,r_n)$ represents a trajectory of $n$ interactions, and $R(\tau)$ denotes the cumulative discounted return of that trajectory. 

A fundamental approach to optimize this objective is the Policy Gradient method \citep{reinforce, policy-gradient}. Intuitively, this method shifts the probability mass towards actions that yield higher returns by updating parameters in the direction of the gradient:
\begin{equation}
    \nabla_{\theta} J(\theta) = \mathbb{E}_{s,a \sim \pi_{\theta}} [\nabla_{\theta} \log \pi_{\theta}(a|s) \cdot A(s,a)],
    \label{eq:policy_gradient}
\end{equation}
where $A(s,a)$ is the \textbf{advantage function}. The advantage function serves as a critic to evaluate the selected action relative to a baseline, formulated as:
\begin{equation}
    A(s,a) = Q(s,a) - V(s).
    \label{eq:advantage_def}
\end{equation}
Here, $Q(s,a)$ is the \textbf{action-value function}, representing the expected return of taking action $a$ in state $s$, while $V(s)$ is the \textbf{state-value function}, representing the average expected return of being in state $s$. Thus, $A(s,a)$ quantifies \textit{how much better} a specific action is compared to the average performance of the policy in that state.

Reinforcement Learning typically requires the training of additional value networks to estimate the advantage. This process is computationally expensive and results in a static model that lacks the flexibility to adapt at test time.
\section{Method}
\label{sec:method}
\begin{figure*}[t]
    \centering
    \includegraphics[width=\linewidth]{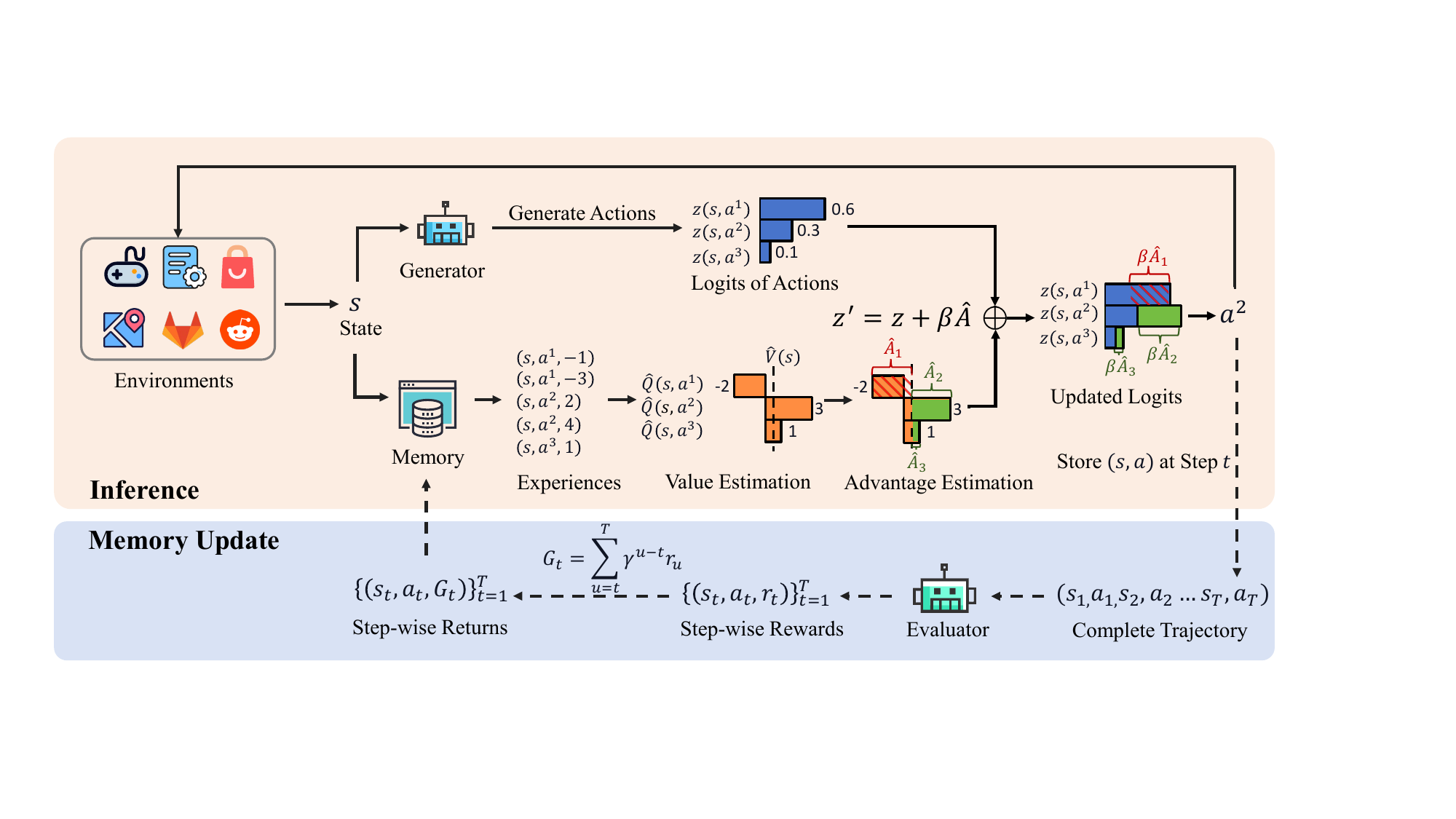}
    \vspace{1mm}
    \caption{\textbf{Overview of the Just-In-Time Reinforcement Learning (JitRL) framework.} 
    The system operates in a continuous loop: 
    (1) In the \textit{Inference} (top), the agent retrieves relevant past experiences $\mathcal{N}(s)$ from the non-parametric memory $\mathcal{M}$. The base LLM's logits $z$ are then adjusted in closed-form ($z' = z + \beta \widehat A$) using the estimated advantage $A$ derived from historical returns, enabling test-time policy improvement without gradient updates. 
    (2) In the \textit{Memory Update} (bottom), completed trajectories are analyzed by an evaluator to compute discounted returns $G_t$. These new experiences are stored back into $\mathcal{M}$, allowing the agent to evolve its policy across episodes.}
    \label{fig:jitrl_framework}

\end{figure*}

To address the limitations of gradient-based RL, we present \textbf{Just-In-Time Reinforcement Learning (JitRL)} framework. Instead of updating model parameters $\theta$, JitRL functions as a test-time policy optimization method that modulates a frozen prior $\pi_{\theta}$ towards an optimal posterior $\pi^*$. As illustrated in Figure~\ref{fig:jitrl_framework}, the framework consists of three key components: (1) constructing a granular experience memory from completed trajectories, (2) estimating state and action values via retrieval during inference, and (3) dynamically updating the LLM's policy logits based on estimated advantages.  The  algorithm can be found in Appendix \ref{app:alg}.

\subsection{Memory Construction}

In the absence of a trained value network, we estimate expected returns by querying a dynamic memory $\mathcal{M} = \{(s_i, a_i, G_i)\}_{i=1}^N$. Here, $G_i$ characterizes the cumulative discounted reward observed after state $s_i$ and action $a_i$ in prior trajectories, effectively capturing the environment's empirical distribution.

\textbf{Reflective Step-wise Rewards.} Credit assignment is challenging under long trajectories, so we use the agent's self-reflective abilities to improve credit assignment. At the end of each episode, an LLM-based \textbf{Evaluator} assesses the completed trajectory $\tau = (s_1, a_1, \dots, s_T, a_T)$ to generate step-wise rewards. This is defined by the mapping $\mathcal{E}: \tau \to \{r_t\}_{t=1}^T$, where each $r_t$ is a scalar reward quantifying the individual contribution of action $a_t$ to the overall success of the task. These rewards are then aggregated into the discounted return $G_t$ to quantify the long-term value of each action:
\begin{equation}
    G_{t}=\sum_{u=t}^{T}\gamma^{u-t}r_{u},
    \label{eq:discounted_return}
\end{equation}
where $\gamma \in [0, 1]$ balances the weight of immediate versus future rewards.

\textbf{State Construction.} Raw states (e.g., full HTML DOM trees or verbose game text) are often too noisy for effective retrieval. We abstract them into compact, structured states $s_t$ that preserve task-relevant semantics while discarding irrelevant details. Our key design principle is mapping functionally equivalent states to similar representations. Details on state construction can be found in Appendix ~\ref{app:implementation}.


Ultimately, each transition is stored in the dynamic memory $\mathcal{M}$ as a compact triplet $(s_t, a_t, G_t)$, implicitly representing the empirical distribution of the environment's dynamics.

\subsection{Test-Time Value Estimation}
\label{sec:value_estimation}

Instead of relying on computationally expensive value networks, we perform on-the-fly value estimation by retrieving relevant transitions from memory. During inference, the current observation is abstracted into a structured state $s$. 
We then retrieve top-$k$ similar neighbors to form a local neighborhood $\mathcal{N}(s)$.


We formulate the \textbf{ state value} estimation $\widehat V(s)$ for state $s$ as an expectation of the returns across this neighborhood:
\begin{equation}
       \widehat V(s)\coloneqq \frac{1}{|\mathcal{N}(s)|} \sum_{i \in \mathcal{N}(s)} G_i. 
\end{equation}
Similarly, to evaluate a specific candidate action $a$, we condition the \textbf{action value} $\widehat Q(s,a)$ on the state-action pair $(s,a)$ over the relevant subset $\mathcal{N}(s, a) = \{(s_i, a_i, G_i) \in \mathcal{N}(s) : a_i = a\}$: We distinguish between two cases:

\begin{enumerate}[leftmargin=*, nosep, label=\textbf{\arabic*.}]
\item \textbf{Known Actions:} If historical evidence exists (i.e., $|\mathcal{N}(s, a)| > 0$), we estimate the action value by averaging the returns within the subset:
\begin{equation}
\begin{aligned}
\widehat Q(s,a)\coloneqq \frac{1}{|\mathcal{N}(s,a)|} \sum_{j \in \mathcal{N}(s,a)} G_j.  
\end{aligned}
\label{eq:value_est}
\end{equation}

\item \textbf{Unseen Actions:} For actions lacking historical data (i.e., $|\mathcal{N}(s, a)| = 0$), we encourage exploration by adopting an optimism under uncertainty principle. With probability $\lambda$, we assign an optimistic value:
\begin{equation}
\widehat Q(s,a) \coloneqq \widehat  V(s) + \frac{\alpha}{|\mathcal{N}(s)|}.
\label{eq:exploration}
\end{equation}
\end{enumerate}
The bonus $\alpha / |\mathcal{N}(s)|$ reflects epistemic uncertainty: sparse memory coverage (small $|\mathcal{N}(s)|$) implies unreliable estimates, warranting exploration. As experiences accumulate, the bonus diminishes, naturally shifting toward exploitation. With probability $1-\lambda$, we assign $Q(s,a) = 0$ to prevent over-exploration.

The \textbf{test-time advantage} is then derived by centering the action value against the state baseline:
\begin{equation}
    \widehat A(s,a) = \widehat Q(s,a) - \widehat V(s).
    \label{eq:test_time_advantage}
\end{equation}
This allows us to identify actions that outperform the local average purely through inference-time retrieval, serving as a proxy for the advantage function $A^{\pi}(s,a)$.

\subsection{Policy Update}
\label{sec:policy_update}

Given these value estimates, we formally derive the policy update as a constrained optimization problem. We seek an optimal policy:
\begin{equation}
    \pi^* = \arg \max_{\pi'} \left( \mathbb{E}_{a \sim \pi'} [ A(s,a)] - \frac{1}{\beta} D_{KL}(\pi' || \pi_{\theta}) \right),
    \label{eq:optimization_objective}
\end{equation}
where $\beta$ is the temperature hyperparameter. Eq. ~\eqref{eq:optimization_objective} maximizes the expected advantage while minimizing the KL-divergence from the frozen reference policy $\pi_{\theta}$ to preserve linguistic coherence. The closed-form solution to this optimization objective is:
\begin{equation}
    \pi^*(a|s) \propto \pi_{\theta}(a|s) \exp\left( \beta A(s,a) \right)
    \label{eq:optimal_policy}
\end{equation}
where $\beta$ is the temperature parameter that controls the strength of the constraint. 

To implement this optimal policy without gradient updates, we map Eq.~\eqref{eq:optimal_policy} directly to the logit space. Let $z(s,a)$ be the logits of the base LLM such that $\pi_{\theta}(a|s) = \text{Softmax}(z(s,a))$. Taking the logarithm of Eq.~\eqref{eq:optimal_policy} yields an additive update rule:
\begin{equation}
    z'(s,a) = z(s,a) + \beta \cdot  \widehat A (s,a)
    \label{eq:logit_update}
\end{equation}
By applying the Softmax function to the updated logits $z'(s,a)$, we recover the optimal policy distribution $\pi^*$.

\subsection{Theoretical Analysis}
\label{subsec:theoretical_analysis}

We provide theoretical justification for JitRL through three progressive steps. First, we prove our logit update is the exact closed-form solution for KL-constrained optimization (Theorem 4.1). Next, we demonstrate that our value estimates converge to the true values (Theorem 4.2), ensuring the overall policy update consistently converges to the optimal policy (Theorem 4.3). Detailed statements and proofs can be found in Appendix~\ref{app:derivation}, ~\ref{app:consistency}, and ~\ref{app:cons_policy}.

\paragraph{Optimality of Logit Update.}
We formulate the inference-time adjustment as a constrained optimization problem. Our goal is to find a policy $\pi'$ that effectively utilizes the retrieved advantage information while preserving the linguistic capabilities of the pre-trained model.

\begin{theorem}[Optimality of Policy Update]
\label{thm:closed_form}
Let $\pi_{\theta}$ be the reference policy with logits $z(s,a)$. Consider the problem of finding the optimal policy $\pi^*$ that maximizes the expected advantage subject to a KL-divergence penalty:
\begin{equation}
    \pi^* = \arg \max_{\pi'} \left( \mathbb{E}_{a \sim \pi'} [\widehat A(s,a)] - \frac{1}{\beta} D_{KL}(\pi' || \pi_{\theta}) \right)
    \label{eq:optimization_objective2}
\end{equation}
where $\beta$ is a temperature hyperparameter. The solution to this optimization problem is given by the additive logit update:
\begin{equation}
    z'(s,a) = z(s,a) + \beta \cdot \widehat A(s,a)
    \label{eq:logit_update_rule}
\end{equation}
where $z'(s,a)$ denotes the logits of the optimal policy $\pi^*$.
\end{theorem}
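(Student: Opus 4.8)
The plan is to treat the maximization in Eq.~\eqref{eq:optimization_objective2}, for a fixed state $s$, as a finite-dimensional constrained optimization over the probability simplex on the action set, and solve it with a single Lagrange multiplier. First I would write the objective explicitly as $L(\pi') = \sum_a \pi'(a|s)\,\widehat A(s,a) - \frac{1}{\beta}\sum_a \pi'(a|s)\log\frac{\pi'(a|s)}{\pi_{\theta}(a|s)}$, subject to $\sum_a \pi'(a|s)=1$, anticipating that the non-negativity constraints will be inactive. Introducing a multiplier $\mu$ for the normalization constraint and differentiating the Lagrangian with respect to each $\pi'(a|s)$ yields the stationarity condition $\widehat A(s,a) - \frac{1}{\beta}\bigl(\log\tfrac{\pi'(a|s)}{\pi_{\theta}(a|s)} + 1\bigr) - \mu = 0$.

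Next I would solve this for $\pi'(a|s)$, obtaining $\pi'(a|s) = \pi_{\theta}(a|s)\exp\bigl(\beta\widehat A(s,a) - 1 - \beta\mu\bigr)$. Since the factor $\exp(-1-\beta\mu)$ is independent of $a$, it is fixed by normalization, so $\pi^*(a|s) = \tfrac{1}{Z}\pi_{\theta}(a|s)\exp(\beta\widehat A(s,a))$ with $Z=\sum_{a'}\pi_{\theta}(a'|s)\exp(\beta\widehat A(s,a'))$, which is exactly Eq.~\eqref{eq:optimal_policy}. Because $\pi_{\theta}(a|s)>0$ (a softmax policy with finite logits) and the exponential is positive, $\pi^*(a|s)>0$ automatically, confirming the non-negativity constraints are inactive and the KKT conditions hold. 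To rule out a saddle point, I would observe that the linear term is concave in $\pi'$ and $-\frac{1}{\beta}D_{KL}(\pi'\|\pi_{\theta})$ is strictly concave on the simplex for $\beta>0$ (its Hessian is $-\frac{1}{\beta}\,\mathrm{diag}(1/\pi'(a|s))$, negative definite), so $L$ is strictly concave and the unique stationary point is the global maximizer.

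Finally I would translate the multiplicative form into the additive logit update: writing $\pi_{\theta}(a|s)=\text{Softmax}(z(s,\cdot))_a \propto \exp(z(s,a))$ and substituting into $\pi^*(a|s)\propto \pi_{\theta}(a|s)\exp(\beta\widehat A(s,a))$ gives $\pi^*(a|s)\propto \exp\bigl(z(s,a)+\beta\widehat A(s,a)\bigr)$, i.e., $\pi^*(a|s) = \text{Softmax}\bigl(z'(s,\cdot)\bigr)_a$ with $z'(s,a) = z(s,a)+\beta\widehat A(s,a)$, which is Eq.~\eqref{eq:logit_update_rule}. I do not expect a genuine obstacle: the computation is routine, and the only points requiring care are justifying that the stationary point is a maximizer (the strict-concavity argument) and verifying that the non-negativity constraints are inactive so the simple single-multiplier treatment is valid rather than needing the full KKT machinery with inequality multipliers. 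A minor subtlety worth noting is the implicit assumption $\pi_{\theta}(a|s)>0$ on the relevant support, which is automatic for softmax policies with finite logits.
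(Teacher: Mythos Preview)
Your proposal is correct and follows essentially the same route as the paper: introduce a single Lagrange multiplier for the normalization constraint, solve the first-order stationarity condition to obtain $\pi^*(a|s)\propto \pi_\theta(a|s)\exp(\beta\widehat A(s,a))$, and then rewrite this in logit space. In fact your treatment is slightly more careful than the paper's, which does not explicitly verify strict concavity or the inactivity of the non-negativity constraints.
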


\paragraph{Consistency of Value and Advantage Estimates.}
Next, we show that the estimators $\widehat{V}_t$, $\widehat{Q}_t$, and $\widehat{A}_t$ converge in probability to their true counterparts $V^{\pi_t}$, $Q^{\pi_t}$, and $A^{\pi_t}$, even under the non-stationary policy setting $(\pi_t)_{t \ge 1}$.

\begin{theorem}[Consistency of $\widehat V_t$, $\widehat Q_t$, and $\widehat A_t$]\label{thm:tracking_QV}
Under the assumptions given in Appendix \ref{app:consistency}, for any fixed query state $s$ and action $a$ at time
$t$, as $t\to\infty$,
\begin{align*}
\widehat V_t(s) & \xrightarrow{p} V^{\pi_t}(s),\\
\widehat Q_t(s,a) & \xrightarrow{p} Q^{\pi_t}(s,a), \\
\widehat A_t(s,a) & \xrightarrow{p} A^{\pi_t}(s,a).
\end{align*}
\end{theorem}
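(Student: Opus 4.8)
The plan is to read each of $\widehat V_t$, $\widehat Q_t$, $\widehat A_t$ as a $k$-nearest-neighbour regression estimate of the conditional mean of the discounted return, and to bound its error by a bias–variance decomposition, with the extra wrinkle that the ``labels'' $G_i$ stored in $\mathcal{M}$ were generated under a drifting policy sequence. Let $t_i$ denote the episode index at which transition $i$ entered memory, let $\mathcal{F}_i$ be the natural filtration at that point, and note that by the definition of the discounted return $\E[G_i \mid \mathcal{F}_i] = V^{\pi_{t_i}}(s_i)$ (and, restricted to $a_i=a$, equals $Q^{\pi_{t_i}}(s_i,a)$). Writing $k_t := |\mathcal{N}_t(s)|$, I would decompose
\begin{align*}
\widehat V_t(s) - V^{\pi_t}(s)
&= \underbrace{\tfrac{1}{k_t}\!\sum_{i\in\mathcal{N}_t(s)}\!\big(G_i - V^{\pi_{t_i}}(s_i)\big)}_{\text{(I): fluctuation}} \\
&\quad + \underbrace{\tfrac{1}{k_t}\!\sum_{i\in\mathcal{N}_t(s)}\!\big(V^{\pi_{t_i}}(s_i) - V^{\pi_{t_i}}(s)\big)}_{\text{(II): state bias}} \\
&\quad + \underbrace{\tfrac{1}{k_t}\!\sum_{i\in\mathcal{N}_t(s)}\!\big(V^{\pi_{t_i}}(s) - V^{\pi_t}(s)\big)}_{\text{(III): policy drift}},
\end{align*}
and then show each term converges to $0$ in probability under the appendix assumptions.

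For term (I), bounded rewards ($|r|\le r_{\max}$) give $|G_i|\le r_{\max}/(1-\gamma)$, and the summands are conditionally mean-zero, so conditioning on $\mathcal{F}_i$ and applying an Azuma–Hoeffding martingale bound yields $\mathrm{(I)} = O_p(1/\sqrt{k_t})$, which vanishes as long as $k_t\to\infty$; a mild technical point here is that $\mathcal{N}_t(s)$ is itself a random, data-dependent index set, so I would either union-bound over candidate size-$k_t$ subsets or use a maximal inequality so the concentration holds uniformly over the realised neighbourhood. For term (II), the continuity (Lipschitz, constant $L$) assumption on $s\mapsto V^{\pi}(s)$ in the abstracted-state metric $d$ bounds it by $L\cdot R_t(s)$ with $R_t(s)=\max_{i\in\mathcal{N}_t(s)} d(s_i,s)$; the standard $k$-NN argument — memory size $N_t\to\infty$, $k_t/N_t\to 0$, and $s$ in the support of the limiting state-visitation measure — gives $R_t(s)\xrightarrow{p}0$. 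For term (III), Lipschitz dependence of $V^{\pi}$ on $\pi$ (say in total variation, constant $L'$) bounds it by $L'\max_{i\in\mathcal{N}_t(s)}\|\pi_{t_i}-\pi_t\|$, and I would close this using the slow-drift assumption (per-step policy change $\to 0$) together with the fact that the retrieved neighbours come from a recency window that still supplies $k_t$ points.

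The action-value case is identical but run on the sub-sample $\mathcal{N}_t(s,a)$, using $\E[G_j\mid \mathcal{F}_j, a_j=a]=Q^{\pi_{t_j}}(s_j,a)$; here I additionally invoke the coverage assumption that $(s,a)$ is visited often enough that $|\mathcal{N}_t(s,a)|\to\infty$, so asymptotically we are always in the ``Known Actions'' branch and the exploration bonus $\alpha/|\mathcal{N}_t(s)|\to 0$ plays no role. Finally $\widehat A_t(s,a)=\widehat Q_t(s,a)-\widehat V_t(s)\xrightarrow{p} Q^{\pi_t}(s,a)-V^{\pi_t}(s)=A^{\pi_t}(s,a)$ follows from the triangle inequality and a union bound over the two events just established: $P(|\widehat A_t-A^{\pi_t}|>\epsilon)\le P(|\widehat Q_t-Q^{\pi_t}|>\epsilon/2)+P(|\widehat V_t-V^{\pi_t}|>\epsilon/2)$.

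I expect term (III) to be the main obstacle: reconciling a memory that may retain arbitrarily old transitions with convergence to the value of the \emph{current} policy $\pi_t$ requires exactly the right assumption package — quantified slow drift plus recency-biased retrieval, or convergence of $\pi_t$ together with a windowed memory — and getting this clean enough that neighbours are effectively ``young'' while still numerous is the delicate part. The secondary difficulty is purely technical: because the stream is adaptively generated (the policy at time $t$ depends on $\mathcal{M}_{t-1}$) and $\mathcal{N}_t(s)$ is a data-dependent subset, the concentration in term (I) must go through martingale arguments with a uniform-over-neighbourhoods device rather than a vanilla i.i.d.\ Hoeffding bound.
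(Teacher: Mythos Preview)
Your three-term decomposition (fluctuation, state bias, policy drift) is exactly the paper's approach; the paper proves $\widehat Q_t$ first and handles the noise term by a simple conditional-variance bound ($\le \sigma^2/k_a$) after conditioning on $\{(S_i,A_i)\}$, rather than the martingale/uniform-neighbourhood machinery you propose. Your main worry about term~(III) is dissolved by fiat in the paper's Slow-Drift assumption, which is stated directly as $\Delta_t := \max_{i\in \mathcal{N}(s)} \sup_{s'} \mathrm{TV}(\pi_{t(i)}(\cdot\mid s'),\, \pi_t(\cdot\mid s')) \to 0$ over the \emph{retrieved} neighbourhood, so no separate recency-window or young-neighbour argument is needed.
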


\paragraph{Consistency of Policy Update.}
Finally, combining the previous two theorems, we conclude that our policy update converges to the KL-regularized optimal policy induced by the true advantages $A^{\pi_t}$:

\begin{theorem}[Consistency of Policy Update]
\label{cor:policy_consistency}
Fix a state $s$, and define
\begin{align*}
\pi_t^*(a\mid s)\propto \pi_\theta(a\mid s)\exp\!\big(\beta A^{\pi_t}(s,a)\big),\\
\widehat\pi_t(a\mid s)\propto \pi_\theta(a\mid s)\exp\!\big(\beta \widehat A_t(s,a)\big).
\end{align*}
Under the assumptions of Theorem~\ref{thm:tracking_QV},
as $t\to\infty$,
\[
\widehat\pi_t(\cdot\mid s) \xrightarrow[]{p} \pi_t^*(\cdot\mid s),
\]
for any finite candidate action set (or more generally, under uniform convergence of $\widehat A_t(s,\cdot)$ on the candidate set).
\end{theorem}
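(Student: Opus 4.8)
The plan is to realize both $\pi_t^*(\cdot\mid s)$ and $\widehat\pi_t(\cdot\mid s)$ as the image of an advantage vector under one fixed ``Gibbs tilting'' map and then push the convergence from Theorem~\ref{thm:tracking_QV} through that map. Fix $s$ and let $\mathcal{A}_s=\{a:\pi_\theta(a\mid s)>0\}$ be the candidate set intersected with the support of the reference policy; any candidate $a\notin\mathcal{A}_s$ gets probability $0$ under both $\pi_t^*$ and $\widehat\pi_t$, so it may be dropped. For $v\in\mathbb{R}^{\mathcal{A}_s}$ define
\begin{equation}
    \Phi_s(v)(a)=\frac{\pi_\theta(a\mid s)\,e^{\beta v(a)}}{\sum_{a'\in\mathcal{A}_s}\pi_\theta(a'\mid s)\,e^{\beta v(a')}} .
\end{equation}
By the closed form of Theorem~\ref{thm:closed_form} we have exactly $\pi_t^*(\cdot\mid s)=\Phi_s\big(A^{\pi_t}(s,\cdot)\big)$ and $\widehat\pi_t(\cdot\mid s)=\Phi_s\big(\widehat A_t(s,\cdot)\big)$, so the target statement $\|\widehat\pi_t(\cdot\mid s)-\pi_t^*(\cdot\mid s)\|\xrightarrow{p}0$ reduces to bounding $\big\|\Phi_s(\widehat A_t(s,\cdot))-\Phi_s(A^{\pi_t}(s,\cdot))\big\|$.

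The structural fact I would use is that $\Phi_s$ is Lipschitz with a constant that does not depend on $t$. Writing $c_a=\log\pi_\theta(a\mid s)$, finite on $\mathcal{A}_s$, we have $\Phi_s(v)=\mathrm{Softmax}(\beta v+c)$, whose Jacobian at every point equals $\beta\big(\mathrm{diag}(p)-pp^\top\big)$ with $p=\Phi_s(v)$; since $x^\top\big(\mathrm{diag}(p)-pp^\top\big)x=\mathrm{Var}_{a\sim p}(x_a)\in[0,\|x\|_\infty^2]$, this Jacobian has Euclidean operator norm at most $\beta$, so $\Phi_s$ is $\beta$-Lipschitz on all of $\mathbb{R}^{\mathcal{A}_s}$. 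The uniformity in $t$ is essential here: $\pi_t^*$ is a moving target, so one cannot invoke the ordinary continuous mapping theorem (which compares a random sequence to a \emph{fixed} limit), but a single Lipschitz constant lets me bound the difference $\widehat\pi_t-\pi_t^*$ directly.

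Combining the two ingredients, and using that all norms on the finite-dimensional simplex are equivalent,
\begin{equation}
    \big\|\widehat\pi_t(\cdot\mid s)-\pi_t^*(\cdot\mid s)\big\|\;\le\;\beta\,\big\|\widehat A_t(s,\cdot)-A^{\pi_t}(s,\cdot)\big\| .
\end{equation}
For a finite candidate set, Theorem~\ref{thm:tracking_QV} gives $\widehat A_t(s,a)\xrightarrow{p}A^{\pi_t}(s,a)$ for each $a$, and a union bound over $\mathcal{A}_s$ upgrades this to $\max_{a\in\mathcal{A}_s}\big|\widehat A_t(s,a)-A^{\pi_t}(s,a)\big|\xrightarrow{p}0$; in the general case, this uniform convergence on the candidate set is exactly the stated hypothesis (and an analogous sup-norm bound, e.g.\ $\|\mathrm{Softmax}(u)-\mathrm{Softmax}(u')\|_1\le e^{2\|u-u'\|_\infty}-1$, replaces the Euclidean Lipschitz estimate). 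Hence the right-hand side tends to $0$ in probability, and therefore so does $\|\widehat\pi_t(\cdot\mid s)-\pi_t^*(\cdot\mid s)\|$, which is the claim.

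I expect the only genuinely delicate point to be the conceptual one above: that $\pi_t^*$ drifts with $t$, so consistency requires a Lipschitz bound uniform in $t$ rather than mere pointwise continuity of the softmax map; equivalently, one could invoke the uniform boundedness of returns (hence of all advantage vectors) from Appendix~\ref{app:consistency} and restrict $\Phi_s$ to a fixed compact set. The remaining steps — the edge-case bookkeeping for $\pi_\theta(a\mid s)=0$, the Jacobian computation, and the union bound — are routine.
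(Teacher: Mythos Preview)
Your argument is correct and follows the same structural template as the paper: define the Gibbs-tilting map $\Phi$ taking an advantage vector to the tilted policy, and push the convergence $\widehat A_t-A^{\pi_t}\xrightarrow{p}0$ through $\Phi$. The paper simply notes that $\Phi$ is continuous (as a composition of continuous functions with strictly positive denominator) and invokes the continuous mapping theorem. You instead compute the softmax Jacobian $\beta(\mathrm{diag}(p)-pp^\top)$ and extract a global $\beta$-Lipschitz bound, then conclude via the direct inequality $\|\widehat\pi_t-\pi_t^*\|\le\beta\|\widehat A_t-A^{\pi_t}\|$.

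The point you flag---that $\pi_t^*$ is a moving target, so the textbook continuous mapping theorem (which compares $g(X_n)$ to $g(X)$ for a \emph{fixed} $X$) does not literally apply---is well taken, and the paper's one-line invocation is indeed a little loose on this. Your Lipschitz route handles the drift cleanly without needing any side assumption. The paper's argument can be repaired the way you note at the end: bounded returns force all advantage vectors into a fixed compact set on which $\Phi$ is uniformly continuous, which is enough to pass $\widehat A_t-A^{\pi_t}\to 0$ through $\Phi$ even with a drifting target. So the two proofs buy the same conclusion; yours is more explicit about the uniformity in $t$, at the cost of the (short) Jacobian calculation.
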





\section{Experiments}

In this section, we design our experiments to answer the following four research questions:
\begin{itemize}[leftmargin=*,nosep]
    \item \textbf{RQ1:} How does JitRL compare against state-of-the-art training-free baselines and weight-update methods?
    \item \textbf{RQ2:} Does JitRL possess generalization capabilities across unseen tasks and different model backbones?
    \item \textbf{RQ3:}  How does JitRL qualitatively correct the agent's decision-making?
    \item \textbf{RQ4:} How do key components impact JitRL's performance?
\end{itemize}

\begin{table*}[t]
  \caption{Main results on WebArena. We report average (\textbf{Avg}) and final (\textbf{Final}) success rate (\%) for each domain. The gap between Final and Avg reflects learning efficiency. The final column (Average) is calculated as the micro-average across all tasks from all websites.}
  \vskip -0.1in
  \label{tab:main_results}
  \begin{center}
    \begin{small}
        \begin{adjustbox}{width=\textwidth}
        \begin{tabular}{lcccccccccccc}
          \toprule
          \multirow{2}{*}{\textbf{Method}} & \multicolumn{2}{c}{\textbf{Admin}} & \multicolumn{2}{c}{\textbf{GitLab}} & \multicolumn{2}{c}{\textbf{Map}} & \multicolumn{2}{c}{\textbf{Reddit}} & \multicolumn{2}{c}{\textbf{Shopping}} & \multicolumn{2}{c}{\textbf{Average}} \\
          & \textbf{Avg} & \textbf{Final}
          & \textbf{Avg} & \textbf{Final}
          & \textbf{Avg} & \textbf{Final}
          & \textbf{Avg} & \textbf{Final}
          & \textbf{Avg} & \textbf{Final}
          & \textbf{Avg} & \textbf{Final}\\
          \midrule
          Static & 39.46 & 39.67 & 38.92 & 38.24 & 30.62 & 31.25 & 39.60 & 42.86 & 24.06 & 25.00 & 35.63 & 36.30 \\
          Memory & 47.91 & 47.80 & 39.31 & 40.20 & 30.47 & 32.81 & 53.02 & 55.04 & 30.16 & 33.16 & 41.36 & 43.00 \\
          Reflexion & 48.46 & 50.55 & 38.43 & 40.69 & 29.38 & 27.34 & 54.88 & 56.59 & 30.83 & 31.25 & 41.08 & 42.12 \\
          AWM & 49.46 & 51.09 & 37.94 & 40.20 & 34.38 & 32.81 & 55.66 & 58.14 & 23.12 & 22.40 & 39.37 & 40.32 \\
          EvoTest & 44.51 & 47.80 & 37.94 & 39.22 & 33.59 & 41.41 & 51.78 & 54.26 & 26.67 & 29.17 & 39.24 & 42.49 \\
          \midrule
          \textbf{JitRL} & \textbf{52.31} & \textbf{56.59} & \textbf{40.78} & \textbf{45.00} & \textbf{37.66} & \textbf{42.19} & \textbf{57.64} & \textbf{61.98} & \textbf{41.67} & \textbf{45.83} & \textbf{46.98} & \textbf{51.35} \\
          \bottomrule
        \end{tabular}
        \vspace{-6mm}
        \end{adjustbox}
    \end{small}
  \end{center}
\end{table*}

\begin{table}[t]
  \caption{
 Comparison of \textbf{Final} success rate (\%) with weight-update methods on \textbf{WebArena-Lite}, the held-out test set. The remainder of WebArena was used to train WebRL.
  }

  \label{tab:webrl_comparison}
  \begin{center}
    \begin{small}
      \begin{adjustbox}{width=\columnwidth}
      \begin{tabular}{lcccccc}
        \toprule
        \textbf{Method} & \textbf{Admin} & \textbf{GitLab} & \textbf{Map} & \textbf{Reddit} & \textbf{Shopping} & \textbf{Average} \\
        \midrule
       SFT    & 20.00 & 20.00 & 26.70 & 52.60 & 13.30 & 23.00 \\
        WebRL & 58.33 & 47.06 & 32.26 & 62.50 & 30.43 & 46.06 \\
        \textbf{JitRL} & \textbf{65.71} & \textbf{56.67} & \textbf{53.57} & \textbf{78.95} & \textbf{53.33} & \textbf{60.00} \\
        \bottomrule
      \end{tabular}
      \vspace{-8mm}
      \end{adjustbox}
    \end{small}
  \end{center}

\end{table}

\begin{table}[ht]
\vspace{-4mm}
    \caption{Results on Jericho games. We report: \textbf{Avg}: average score across 50 episodes; \textbf{Final}: final episode score.}
    \label{tab:jericho_results}
    \begin{center}
    \begin{small}
    \begin{tabular}{lcccccc}
    \toprule
    \multirow{2}{*}{\textbf{Method}} & \multicolumn{2}{c}{\textbf{Library}} & \multicolumn{2}{c}{\textbf{Zork1}} & \multicolumn{2}{c}{\textbf{Zork3}} \\
    \cmidrule(lr){2-3} \cmidrule(lr){4-5} \cmidrule(lr){6-7}
    & \textbf{Avg} & \textbf{Final} & \textbf{Avg} & \textbf{Final} & \textbf{Avg} & \textbf{Final} \\
    \midrule
    Static & 10.0 & 10 & 8.5 & 10 & 0.2 & 0 \\
    Memory & 13.2 & 14 & 22.9 & 25 & 1.0 & 1 \\
    Reflexion & 15.4 & 18 & 26.1 & 35 & 1.4 & 1 \\
    AWM & 12.7 & 10 & 38.8 & 44 & 1.9 & 2 \\
    EvoTest & 21.5 & 26 & 46.8 & 54 & 2.6 & 4 \\
    \midrule
    GRPO & 13.6 & 11 & 16.2 & 10 & 1.1 & 2 \\
    
    \midrule
    \textbf{JitRL} & \textbf{25.9} & \textbf{30} & \textbf{53.0} & \textbf{69} & \textbf{3.1} & \textbf{5} \\
    \bottomrule
    \end{tabular}
    \vspace{-4mm}
    \end{small}
    \end{center}
\end{table}

\subsection{Experimental Setup}

\paragraph{Benchmarks.}
We evaluate our method in two distinct environments to demonstrate versatility:

\begin{itemize}[leftmargin=*,nosep]
    \item \textbf{WebArena} \citep{webarena}: A realistic web environment comprising multiple functional websites.
    It challenges agents to navigate complex DOM trees and execute sequential actions to fulfill user instructions.
    \item \textbf{Jericho} \citep{jericho}: A benchmark suite for interactive fiction games where agents interact purely via textual commands. 
    We evaluate performance on three representative games: Library, Zork 1, and Zork 3.
\end{itemize}
\paragraph{Baselines.}
We compare JitRL against a comprehensive set of methods, which fall into two paradigms: training-free and weight-update. 

\begin{itemize}[leftmargin=*,nosep]
    \item \textbf{Training-Free Methods.} We evaluate five inference-time strategies:
    (1) \textbf{Static}: A non-learning agent with fixed configuration.
    (2) \textbf{Memory}: An in-context learning baseline that fills the context window with full transcripts of past episodes, utilizing a First-In-First-Out (FIFO) strategy to handle token limits.
    (3) \textbf{Reflexion} \citep{reflexion}: A prompt-based method where the agent generates structured textual self-reflections after each episode to guide subsequent attempts.
    (4) \textbf{AWM} \citep{awm}: An episodic memory approach that extracts and persistently stores reusable workflows derived from successful trajectories.
    (5) \textbf{EvoTest} \citep{evotest}: An evolutionary optimization framework that iteratively rewrites prompts, updates memory with state-action pairs, and dynamically tunes hyperparameters. For JitRL and training-free baselines, we use \texttt{Gemini-2.5-flash} as LLM backbones.

    \item \textbf{Weight-Update Methods}. We compare against leading training approaches. For WebArena, we directly utilize the pre-trained checkpoints for \textbf{SFT} and \textbf{WebRL} provided by \cite{webrl}, both built on \texttt{Llama-3.1-70B-Instruct}. For Jericho, we employ the task-specific \texttt{Qwen3-32B} checkpoints, which are trained via \textbf{GRPO}~\citep{grpo} for each individual game. Refer to Appendix~\ref{app:baselines} for a detailed description of the training process.
\end{itemize}



\paragraph{Implementation Details.}
We implement two variants to derive logits for the logit-based policy update. This design specifically addresses the constraint that many black-box models do not expose access to raw log-probabilities:
\begin{itemize}[leftmargin=*,nosep]
    \item \textbf{Token-level Logit}: 
    We prompt the generator to select an action from $k$ candidates by outputting a corresponding index token (e.g., ``1'', ``2''). We then extract the log-probabilities of these tokens as logits.

    \item \textbf{Verbalized Logit}: 
    For models that do not expose log-probabilities, we prompt the LLM to explicitly output a confidence score (0--100) and transform them to logits for each candidate action.
\end{itemize}

More implementation details, such as state representation, memory retrieval, and action handling, can be found in Appendix~\ref{app:implementation}. Prompt templates can be found in Appendix ~\ref{app:prompts}, hyperparameters can be found in Appendix ~\ref{app:hyperparameters}.

\subsection{RQ1: Main Performance Comparison}
To address RQ1, we employ a multi-trial sequential testing protocol. Instead of evaluating each task only once, the agent executes $L$ consecutive episodes for each task $i$. This setup is designed to simulate a realistic deployment scenario where an agent must improve its performance on-the-fly through repeated interactions. 


\subsubsection{Results on WebArena}

\paragraph{Metrics.} Our evaluation dataset contains $\mathcal{T}$ tasks, which we run sequentially, each for $L=5$ times, and evaluate performance primarily using the Success Rate. Let $y_{i,j} \in \{0,1\}$ denote the binary completion status (success=1, failure=0) of task $i$ at episode $j$. We report two aggregated metrics: (1) \textbf{Average Success Rate (Avg)}, calculated as $\frac{1}{\mathcal{T}\cdot 5}\sum_{i=1}^{\mathcal{T}}\sum_{j=1}^{5}y_{i,j}$, which averages performance across all attempts to reflect the overall learning efficiency; and (2) \textbf{Final Success Rate (Final)}, computed as $\frac{1}{\mathcal{T}}\sum_{i=1}^{\mathcal{T}}y_{i,5}$, which measures the success rate of the final episode to indicate the agent's converged capability.  Notably, a larger differential between Final and Avg indicates a steeper learning curve, demonstrating that the agent successfully leverages historical memory to improve its policy over time.

Table~\ref{tab:main_results} presents the performance comparison of JitRL and training-free baselines on WebArena. JitRL outperforms all training-free baselines in both cumulative (Avg) and converged (Final) success rates. Moreover, the significant gap between the Avg and Final highlights JitRL's robust learning capability. Performance gains are most pronounced in structured domains like Shopping (+73.2\% over Static) due to high trajectory reusability. In contrast, Reflexion occasionally suffers from ``reflection noise'' in sites like Map, where misleading feedback degrades performance.

Table~\ref{tab:webrl_comparison} compares JitRL with weight-update methods (WebRL) on the held-out WebArena-Lite \citep{webrl} subset, as the remaining WebArena tasks were utilized for training WebRL and SFT. JitRL attains SOTA performance purely through inference-time optimization. This demonstrates that JitRL serves as a highly efficient alternative to computationally intensive training methods. We also provide results with the same backbone in an on-the-fly setting (Appendix~\ref{app:fair_comparison}), and with the same backbone following separate training and evaluation phases (Appendix~\ref{app:large_budget}).

\subsubsection{Results on Jericho}


\begin{figure*}[h]
    \centering
    \includegraphics[width=\textwidth]{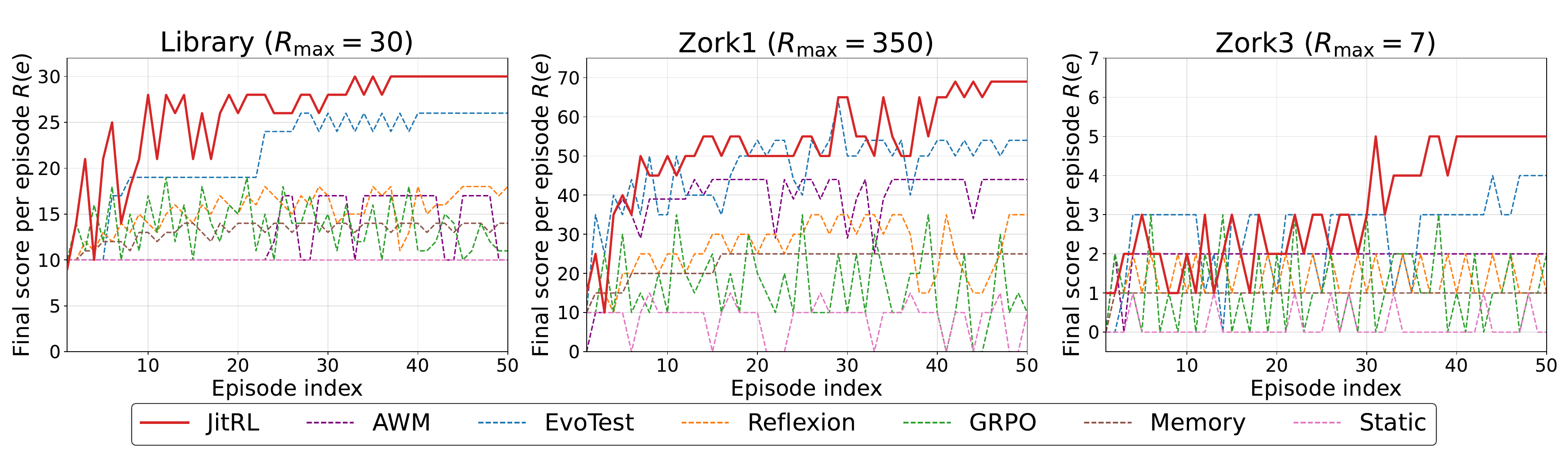}

    \caption{Learning curves on Jericho games. JitRL shows consistent improvement across episodes.}
     \label{fig:jericho}
\end{figure*}

\paragraph{Metrics.} For Jericho, we evaluate the \textbf{Game Score}. Let $y_{i,j}$ represent the score obtained in game $i$ during episode $j$. Similar to WebArena, we report the \textbf{Average Score (Avg)} over 50 episodes to capture the learning stability, and the \textbf{Final Score (Final)} to assess the performance at the final episode.



Table~\ref{tab:jericho_results} presents the quantitative comparison on Jericho. JitRL achieves the highest scores across all three games, significantly outperforming baselines. The learning curves for each episode are visualized in Figure~\ref{fig:jericho}, which reveal several key insights: (1) \textbf{JitRL exhibits rapid initial learning}, achieving competitive performance within the first 10-15 episodes; (2) \textbf{the performance gap between JitRL and baselines widens as episodes progress}, indicating effective experience accumulation; and (3) \textbf{JitRL shows reduced variance in later stages}, implying stable policy convergence. In contrast, GRPO exhibits high variance throughout, suggesting that gradient-based updates struggle with the sparse reward signals. Memory-based approaches (Memory, AWM) plateau early, because they over-rely on established memory patterns, which inhibit further exploration and cause the agents to converge to suboptimal policies. A same-backbone comparison between JitRL and GRPO is provided in Appendix~\ref{app:same_backbone_jericho}.

\begin{table}[t]
  \caption{Comparison of average (\textbf{Avg}) and \textbf{Final} success rate between JitRL and baseline methods across different backbone LLMs.}
  \vskip 0.1in
  \label{tab:base_models}
  \begin{center}
    \begin{small}
        \begin{tabular}{lcccc}
          \toprule
          \multirow{2}{*}{\textbf{Method}} 
          & \multicolumn{2}{c}{\textbf{Admin}} 
          & \multicolumn{2}{c}{\textbf{Reddit}} \\
          \cmidrule(lr){2-3} \cmidrule(lr){4-5}
          & \textbf{Avg} & \textbf{Final}
          & \textbf{Avg} & \textbf{Final} \\
          
          \midrule
          \multicolumn{5}{c}{\textit{\textbf{Gemini-2.5-flash}}} \\ \addlinespace[2pt]
          Static & 39.46 & 39.67 & 39.60 & 42.86 \\
          Memory & 47.91 & 47.80 & 53.02 & 55.04 \\
          Reflexion & 48.46 & 50.55 & 54.88 & 56.59 \\
          AWM & 49.46 & 51.09 & 55.66 & 58.14 \\
          EvoTest & 44.51 & 47.80 & 51.78 & 54.26 \\
          \textbf{JitRL (Ours)} & \textbf{52.31} & \textbf{56.59} & \textbf{57.64} & \textbf{61.98} \\

          \midrule
          \multicolumn{5}{c}{\textit{\textbf{GPT-5-mini}}} \\ \addlinespace[2pt]
          Static & 37.07 & 37.50 & 47.29 & 46.51 \\
          Memory & 43.04 & 45.65 & 50.54 & 51.94 \\
          Reflexion & 40.43 & 41.30 & 49.15 & 49.61 \\
          AWM & 40.00 & 40.22 & 50.23 & 51.94 \\
          EvoTest & 42.93 & 43.48 & 51.94 & 53.49 \\
          \textbf{JitRL (Ours)} & \textbf{48.04} & \textbf{51.63} & \textbf{54.26} & \textbf{57.36} \\

          \midrule
          \multicolumn{5}{c}{\textit{\textbf{DeepSeek-V3.2}}} \\ \addlinespace[2pt]
          Static & 32.07 & 32.61 & 42.02 & 43.41 \\
          Memory & 37.07 & 45.65 & 51.47 & 56.28 \\
          Reflexion & 38.04 & 40.22 & \textbf{55.04} & 56.59 \\
          AWM & 42.07 & 44.02 & 52.71 & 55.04 \\
          EvoTest & 44.35 & 45.65 & 50.85 & 53.49 \\
          \textbf{JitRL (Ours)} & \textbf{50.65} & \textbf{54.35} & 54.42 & \textbf{61.24} \\
          \bottomrule
        \end{tabular}
    \end{small}
  \end{center}
  \vskip -0.1in
\vspace{-4mm}
\end{table}

\subsection{RQ2: Generalization Capabilities}


\begin{table}[t]
  \caption{Generalization to Unseen Tasks. We report the average success rate (\%) where the agent is restricted to retrieving memories exclusively from disjoint tasks.}
  \label{tab:first_run_comparison}
  \begin{center}
    \begin{small}
      \begin{adjustbox}{width=\columnwidth}
      \begin{tabular}{lccccc}
        \toprule
        \textbf{Method} & \textbf{Admin} & \textbf{GitLab} & \textbf{Map} & \textbf{Reddit} & \textbf{Shopping} \\
        \midrule
        Static & 36.96 & 37.25 & 31.19 & 36.43 & 23.44 \\
        Memory & 47.83 & 37.75 & 33.03 & 50.39 & 26.04 \\
        Reflexion & 46.74 & 34.80 & 31.19 & 48.84 & 31.77 \\
        AWM & 47.28 & 36.27 & 34.86 & 53.49 & 23.44 \\
        EvoTest & 43.48 & 35.78 & 34.86 & 49.61 & 18.75 \\
        \midrule
        \textbf{JitRL} & \textbf{48.37} & \textbf{38.73} & \textbf{35.78} & \textbf{55.04} & \textbf{36.98} \\
        \bottomrule
      \end{tabular}
      \end{adjustbox}
    \end{small}
  \end{center}
\end{table}


\begin{table}[t]
\vspace{-4mm}
  \caption{Distribution (\%) of cross-task memory utilization.}
  \label{tab:memory_utilization}
  \begin{center}
    \begin{small}
      \begin{adjustbox}{width=\columnwidth}
      \vspace{-6mm}
        \begin{tabular}{lccccc}
          \toprule
          \textbf{Admin} & \textbf{GitLab} & \textbf{Map} & \textbf{Reddit} & \textbf{Shopping} & \textbf{Avg} \\
          \midrule
          40.54 & 38.47 & 37.64 & 56.31 & 62.19 & 47.03 \\
          \bottomrule
        \end{tabular}
      \end{adjustbox}
    \end{small}
  \end{center}
  \vspace{-4mm}
\end{table}

\begin{table*}[t]
    \centering
    \caption{\textbf{Qualitative Analysis of Policy Improvement.} We select three representative cases showing how JitRL modifies decision-making. \textbf{Base} and \textbf{JitRL} denote the logits before and after the memory-based update. Compared to \textbf{Base}, \textbf{JitRL} decreases the logits for \textcolor{red}{\textbf{incorrect}} options and increases them for \textcolor{green!60!black}{\textbf{correct}} options. In this way, JitRL corrects semantic priors and optimizes efficiency.}
    \label{tab:case_study}
    \vspace{8pt}
    \small
    \renewcommand{\arraystretch}{1.2} 
    
    \begin{tabularx}{\linewidth}{l l c c >{\raggedright\arraybackslash}X}
        \toprule
        \textbf{Task} & \textbf{Candidate Action} & \textbf{Base} & \textbf{JitRL} & \textbf{Mechanism Explanation} \\
        \midrule
        

        Find customer reviews & \textcolor{red}{\textbf{click(CATALOG)}} & \textbf{0.90} & 0.40 & \multirow{2}{=}{While ``Catalog'' appears semantically intuitive, memory corrects this prior: reviews are located under ``Marketing''.} \\
        (Site Functionality) & \textcolor{green!60!black}{\textbf{click(MARKETING)}} & 0.70 & \textbf{1.40} &  \\ 

        \addlinespace[3pt]

        Find posts in subreddit  & \textcolor{red}{\textbf{fill(Search, ``..'')}} & \textbf{0.95} & 0.45 & \multirow{2}{=}{Global search often yields noisy results. Memory steers the agent toward ``Forums'' for a deterministic and accurate path.} \\
        (Navigation Precision) & \textcolor{green!60!black}{\textbf{click(Forums)}} & 0.80 & \textbf{1.80} &  \\

        \addlinespace[3pt]

        Access product inventory & \textcolor{red}{\textbf{click(Products)}} & \textbf{0.95} & -0.28 & \multirow{2}{=}{Clicking loads a generic page. Memory identifies ``hover'' as an efficient shortcut to instantly reveal subcategories.} \\
        (UI Mechanics) & \textcolor{green!60!black}{\textbf{hover(Products)}} & 0.40 & \textbf{0.90} &  \\
        
        \bottomrule

    \end{tabularx}
\end{table*}




To answer RQ2, we assess the universality of our framework. This section evaluates JitRL's performance across different LLM backbones and investigates its zero-shot generalization on unseen tasks where no direct historical memory exists.

\paragraph{Generalization Across Backbones.}
We further evaluate JitRL on GPT-5-mini and DeepSeek-V3.2 \citep{deepseek} to assess its generalizability against other baselines. As shown in Table~\ref{tab:base_models}, JitRL achieves state-of-the-art performance in most cases. This indicates that our logit-update mechanism is model-agnostic and robustly enhances the capabilities of efficient models without parameter tuning.

\paragraph{Cross Task Generalization.}
We assess cross-task generalization on WebArena by simulating a cold-start setting, where the agent utilizes memories solely from disjoint tasks.
As shown in Table~\ref{tab:first_run_comparison}, JitRL consistently outperforms baselines, indicating effective transfer of abstract procedural knowledge rather than specific solutions.

\subsection{Qualitative Analysis: Correcting Semantic Priors}

\label{sec:case_study}

To provide an intuitive understanding of JitRL's mechanism, we examine representative cases where the retrieved memory successfully overrides the base model's erroneous intuition. As shown in Table~\ref{tab:case_study}, regarding \textbf{Site Functionality}, the agent learns to ignore intuitive but incorrect links (e.g., Catalog'') and instead navigates to the correct section (Marketing''). Similarly, for \textbf{Navigation Precision}, memory steers the agent away from noisy global searches toward deterministic navigation links. Additionally, in terms of \textbf{UI Mechanics}, JitRL optimizes interaction efficiency, identifying shortcuts such as prioritizing hover actions to directly reveal fine-grained subcategories rather than clicking into broad category pages.
Additional case studies on Jericho can be found in Appendix~\ref{app:case_jericho}.

\subsection{Qualitative Analysis: Importance of Cross-Task Memory}
We observe the retrieval patterns when both cross-task memory and cross-episode memory are enabled. As shown in Table~\ref {tab:memory_utilization}, cross-task memory accounts for nearly 50\% of the retrieved context, indicating a substantial reliance on this mechanism alongside cross-episode memory. Furthermore,  websites that trigger a higher frequency of cross-task memory retrieval also demonstrate superior cross-task generalization performance compared to Static. 

\subsection{RQ4: Ablation Studies}
Finally, we address RQ4 by analyzing the sensitivity of JitRL to key components.
\begin{table}[t]
  \caption{Ablation: Logit update vs. prompting on WebArena.}
  \label{tab:logit_vs_prompt}
  \begin{center}
    \begin{small}
      \begin{tabular}{lcc}
        \toprule
        \textbf{Method} & \textbf{Admin} & \textbf{Reddit} \\
        \midrule
        JitRL (Prompt Update) & 49.46 & 53.02 \\
        JitRL (Logit Update) & \textbf{52.31} & \textbf{57.64} \\
        \bottomrule
      \end{tabular}
    \end{small}
  \end{center}
\end{table}

\textbf{Impact of Logit Update.}
To investigate whether the performance gain stems from the logit update mechanism or just the retrieved information, we compare JitRL's \textbf{Logit Update} against \textbf{Prompt Update}, which uses the exact same memory retrieval but appends them to the prompt instead of modifying logits. Table~\ref{tab:logit_vs_prompt} shows that direct Logit Update outperforms Prompt Update on Admin and Reddit websites. We attribute this to the limitations of prompting: as context length grows, LLMs often struggle to attend to retrieved cues or fail to strictly follow instructions. In contrast, our logit update directly modulates the output distribution, ensuring the agent effectively exploits historical advantages.

\begin{figure}[h]
    \centering
    \includegraphics[width=\columnwidth, page=1]{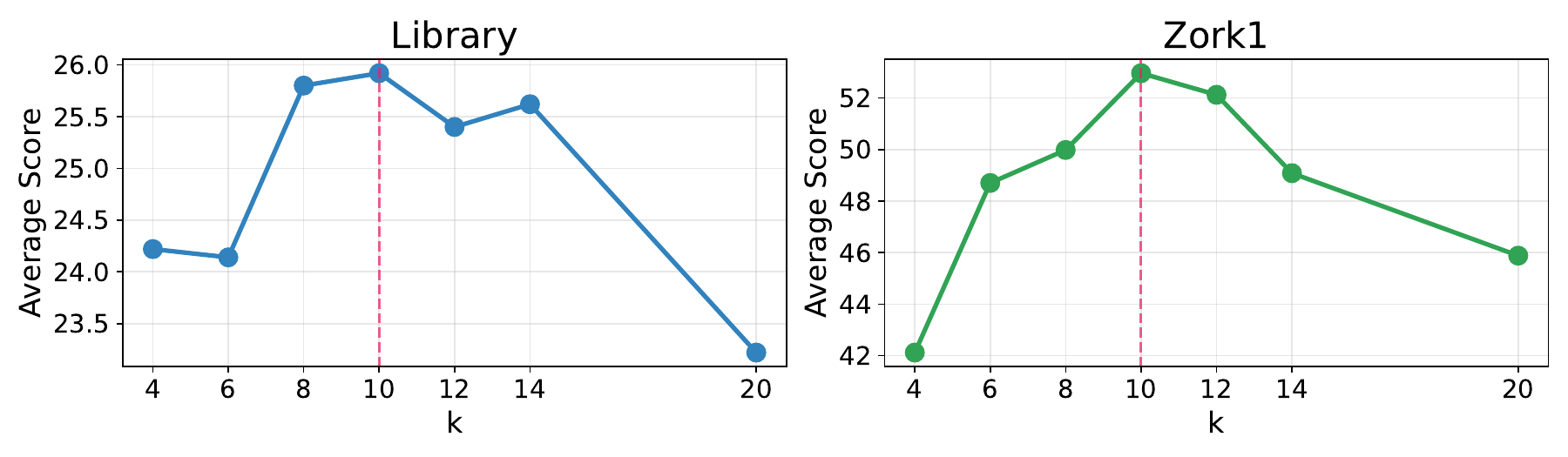}

    \caption{Impact of Retrieval Neighbor Count ($k$).}
    \label{fig:k}
\end{figure}
\textbf{Impact of Retrieval Neighbor Count.} We evaluate the impact of retrieval neighbor count ($k$) on  Library and Zork1. As shown in Figure~\ref{fig:k}, the agent achieves robust performance with $k$ ranging from 8 to 14. Outside this range, performance degrades due to the trade-off between context sufficiency and noise: a small $k$ limits the historical evidence needed for stable value estimation, leading to high variance, while a large $k$ introduces excessive noise that hinders efficient exploration and delays convergence.

More ablation studies such as the impact of state representation can be found in Appendix ~\ref{app:state_representation}.


\subsection{Cost Comparison}

We compare the monetary costs of JitRL against all baselines. As shown in Table~\ref{tab:all_baselines_cost}, the weight-update method, WebRL, incurs massive overheads when trained on  \texttt{Llama-3.1-70B-Instruct}. The costs are estimated based on training expenses on NVIDIA H200 GPUs. In contrast, JitRL aligns with other training-free methods whose costs are calculated based on API usage prices. Despite its lower cost, JitRL achieves superior performance, as demonstrated in our main experiments. The detailed cost analysis can be found in Appendix ~\ref{app:cost}.

\begin{table}[ht]
    \centering
    \setlength{\tabcolsep}{1.5pt}
    \caption{Comparison of training cost. }
    \label{tab:all_baselines_cost}
    \resizebox{\linewidth}{!}{
    \begin{tabular}{lccccccc}
        \toprule
        \textbf{Metric} & \textbf{Static} & \textbf{Memory} & \textbf{Reflexion} & \textbf{AWM} & \textbf{EvoTest} & \textbf{WebRL} & \textbf{JitRL} \\
        \midrule
        \textbf{Cost} & \$200 & \$230 & \$220 & \$250 & \$220 & $\sim$\$9900 & \$290 \\
        \bottomrule
    \end{tabular}
    }
\end{table}
 
\section{Conclusion}

In this work, we propose Just-In-Time Reinforcement Learning (JitRL), a training-free framework that enables frozen LLMs to continuously adapt at test time by directly optimizing logits. We theoretically prove that our update rule is the optimal closed-form solution for RL policy optimization. 
Experimental results show that JitRL not only outperforms existing training-free methods but also outperforms expensive weight-update baselines, while reducing monetary costs by over 30$\times$, offering a scalable and efficient path for agentic continuous learning.

\section*{Limitations}

Despite its strong empirical performance, JitRL has several limitations that point to directions for future work.

\textbf{Dependence on the frozen base model.} JitRL re-weights actions over the candidate set proposed by the base model; it cannot discover actions that the base model would never generate. Furthermore, JitRL relies on an LLM evaluator to produce accurate step-wise rewards — if the evaluator misattributes credit, the resulting advantage estimates will be inaccurate and may degrade policy quality.

\textbf{Less suited for tasks where key information is hard to represent in text.} JitRL's state representation and retrieval operate on text, so for tasks where critical patterns are difficult to express textually — such as spatial reasoning (e.g., chess board positions) or time-series forecasting — text-based retrieval may fail to capture the relevant similarity between states, limiting the effectiveness of memory-based value estimation.

\section*{Impact Statement}

By eliminating gradient-based updates, JitRL reduces the cost of continual agent learning by over 30$\times$, making adaptive agents accessible without large-scale GPU infrastructure. However, the memory bank stores real interaction trajectories, which may inadvertently capture sensitive user information in real-world deployments.

\section*{Acknowledgement}

This research is supported by the Ministry of Education, Singapore, under the Academic Research Fund Tier 2 (FY2025) (Grant T2EP20124-0038). Computational work involved in this
research work is partially supported by NUS IT’s Research Computing group
under grant number NUSREC-HPC-00001.




\bibliography{references}
\bibliographystyle{icml2026}
\onecolumn
\appendix

\begin{algorithm}[h]
    \caption{JitRL: Test-Time Policy Optimization}
    \label{alg:jitrl}
 \begin{algorithmic}
    \STATE {\bfseries Input:} LLM $\pi_\theta$, hyperparameters $k, \beta, \lambda, \alpha, \gamma$
    \STATE {\bfseries Initialize:} Memory $\mathcal{M} \leftarrow \emptyset$

    \FOR{each episode}
        \STATE Initialize trajectory $\tau \leftarrow []$
        \WHILE{episode not done}
            \STATE Observe state $o$; abstract to $s \leftarrow \text{AbstractState}(o)$

            \STATE \textcolor{blue}{\textit{// Step 1: Retrieve similar experiences}}
            \STATE $\mathcal{N}(s) \leftarrow$ Top-$k$ neighbors from $\mathcal{M}$ by Jaccard$(s, s_i)$

            \STATE \textcolor{blue}{\textit{// Step 2: Estimate state value (baseline)}}
            \STATE $V(s) \leftarrow \frac{1}{|\mathcal{N}(s)|}\sum_{(s_i, a_i, G_i) \in \mathcal{N}(s)} G_i$

            \STATE \textcolor{blue}{\textit{// Step 3: Construct augmented candidate set}}
            \STATE Get LLM candidates $\mathcal{C}_{\text{LLM}}$ with logits $z(s,a)$

            \STATE $\mathcal{C} \leftarrow \mathcal{C}_{LLM} \cup \{ a_i : (s_i, a_i, G_i) \in \mathcal{N}(s) \}$.
            \FOR{$a \in \mathcal{C} \setminus \mathcal{C}_{\text{LLM}}$}
                \STATE $z(s,a) \leftarrow 0$ \hfill \textcolor{gray}{\textit{// Initialize logit for memory-only actions}}
            \ENDFOR

            \STATE \textcolor{blue}{\textit{// Step 4: Estimate value for each candidate}}
            \FOR{$a \in \mathcal{C}$}
                \STATE $\mathcal{N}(s, a) \leftarrow \{(s_i, a_i, G_i) \in \mathcal{N}(s) : a_i = a\}$
                \IF{$|\mathcal{N}(s, a)| > 0$}
                    \STATE $Q(s,a) \leftarrow \frac{1}{|\mathcal{N}(s,a)|}\sum_{(s_i, a_i, G_i) \in \mathcal{N}(s,a)} G_i$ \hfill \textcolor{gray}{\textit{// Known action}}
                \ELSE
                    \IF{$\text{rand}() < \lambda$}
                        \STATE $Q(s,a) \leftarrow V(s) + \alpha / |\mathcal{N}(s)|$ \hfill \textcolor{gray}{\textit{// Exploration bonus}}
                    \ELSE
                        \STATE $Q(s,a) \leftarrow 0$ \hfill \textcolor{gray}{\textit{// Neutral value}}
                    \ENDIF
                \ENDIF
            \ENDFOR

            \STATE \textcolor{blue}{\textit{// Step 5: Compute normalized advantage}}
            \FOR{$a \in \mathcal{C}$}
                \STATE $A(s,a) \leftarrow Q(s,a) - V(s)$
            \ENDFOR
            \STATE $A_{\max} \leftarrow \max_{a' \in \mathcal{C}}|A(s,a')| + \epsilon$
            \FOR{$a \in \mathcal{C}$}
                \STATE $\tilde{A}(s,a) \leftarrow A(s,a) / (A_{\max}+\epsilon)$
            \ENDFOR

            \STATE \textcolor{blue}{\textit{// Step 6: Closed-form policy update}}
            \FOR{$a \in \mathcal{C}$}
                \STATE $z'(s,a) \leftarrow z(s,a) + \beta \cdot \tilde{A}(s,a)$
            \ENDFOR

            \STATE Sample $a \sim \text{Softmax}(z')$; append $(s, a)$ to $\tau$
        \ENDWHILE

        \STATE \textcolor{blue}{\textit{// Step 7: Memory update after episode ends}}
        \STATE $\{r_t\}_{t=0}^{T} \leftarrow \text{Evaluator}(\tau)$
        \FOR{$t = 0$ \textbf{to} $T$}
            \STATE $G_t \leftarrow \sum_{u=t}^{T} \gamma^{u-t} r_u$
            \STATE $\mathcal{M} \leftarrow \mathcal{M} \cup \{(s_t, a_t, G_t)\}$
        \ENDFOR
    \ENDFOR
 \end{algorithmic}
 \end{algorithm}

\section{Algorithm}
\label{app:alg}
Algorithm ~\ref{alg:jitrl} shows the overall algorithm of JitRL.
 
\section{Derivation of the Optimal Closed-Form Solution}
\label{app:derivation}

In this appendix, we provide the full derivation for Theorem \ref{thm:closed_form}. We show that the additive logit update rule is the exact solution to the KL-constrained advantage maximization problem.

\subsection{Problem Formulation}

Recall our optimization objective defined in Eq.~\eqref{eq:optimization_objective}. We seek a policy distribution $\pi'$ that maximizes the expected estimated advantage while maintaining a bounded KL divergence from the reference policy $\pi_{\theta}$:

\begin{equation}
    \max_{\pi'} \quad \mathbb{E}_{a \sim \pi'} [\widehat A(s,a)] - \frac{1}{\beta} D_{KL}(\pi' || \pi_{\theta}).
\end{equation}

Subject to the probability constraint that the distribution must sum to 1:
\begin{equation}
    \sum_{a } \pi'(a) = 1.
\end{equation}

Expanding the expectation and KL divergence terms, the objective function $J(\pi')$ can be written as:

\begin{equation}
    J(\pi') = \sum_{a} \pi'(a) \widehat A(s,a) - \frac{1}{\beta} \sum_{a} \pi'(a) \log \frac{\pi'(a)}{\pi_{\theta}(a)}.
\end{equation}

\subsection{Lagrangian Optimization}

To solve this constrained optimization problem, we introduce a Lagrange multiplier $\lambda$ for the summation constraint. The Lagrangian function $\mathcal{L}(\pi', \lambda)$ is:

\begin{equation}
    \mathcal{L}(\pi', \lambda) = \sum_{a} \pi'(a) \widehat A(s,a) - \frac{1}{\beta} \sum_{a} \pi'(a) (\log \pi'(a) - \log \pi_{\theta}(a)) + \lambda \left( \sum_{a} \pi'(a) - 1 \right).
\end{equation}

We take the derivative of $\mathcal{L}$ with respect to $\pi'(a)$ for a specific action $a$ and set it to zero:

\begin{equation}
    \frac{\partial \mathcal{L}}{\partial \pi'(a)} = \widehat A(s,a) - \frac{1}{\beta} (1 + \log \pi'(a) - \log \pi_{\theta}(a)) + \lambda = 0.
\end{equation}

Rearranging the terms to solve for $\log \pi'(a)$:

\begin{equation}
    \frac{1}{\beta} \log \pi'(a) = \widehat A(s,a) + \frac{1}{\beta} \log \pi_{\theta}(a) + \lambda - \frac{1}{\beta}.
\end{equation}

Multiplying by $\beta$:

\begin{equation}
    \log \pi'(a) = \beta \widehat A(s,a) + \log \pi_{\theta}(a) + (\beta \lambda - 1).
\end{equation}

Exponentiating both sides gives the form of the optimal policy:

\begin{equation}
    \pi'(a) = \pi_{\theta}(a) \exp(\beta \widehat A(s,a)) \cdot \exp(\beta \lambda - 1).
\end{equation}

Here, the term $\exp(\beta \lambda - 1)$ is constant across all actions $a$ and serves as the normalization factor (partition function) $Z$ to ensure $\sum \pi'(a) = 1$. Thus, we can write:

\begin{equation}
    \pi'(a) = \frac{1}{Z} \pi_{\theta}(a) \exp(\beta \widehat A(s,a)) \propto \pi_{\theta}(a) \exp(\beta \widehat A(s,a)).
    \label{eq:optimal_prob}
\end{equation}

is the exact implementation of the optimal policy update in logit space. This completes the proof. \hfill $\square$

\section{Consistency of Values and Advantages Under Nonstationary Policies}\label{app:consistency}

\subsection{Setup and Assumptions}
We consider the nonstationary setting in which memory is generated under
a sequence of policies $(\pi_t)_{t\ge 1}$. 
Each memory entry $i$ is a triplet $(S_i, A_i, G_i)$ collected at time $t(i)$ under
policy $\pi_{t(i)}$. For a query at time $t$, estimators are formed using the
current memory of size $N=N(t)$.

\begin{assumption}[State regularity]
For all policies $\pi$:
\begin{align*}
|V^\pi(s) - V^\pi(s')| &\le L_V\, d(s,s'), \\
|Q^\pi(s,a) - Q^\pi(s',a)| &\le L_Q\, d(s,s'),
\end{align*}
for all states $s,s'$ and actions $a$.
\end{assumption}

\begin{assumption}[Noise model]
For each memory entry,
\[
G_i = Q^{\pi_{t(i)}}(S_i,A_i) + \epsilon_i,
\qquad
\E[\epsilon_i \mid S_i,A_i]=0,
\qquad
\Var(\epsilon_i \mid S_i,A_i)\le \sigma^2 .
\]
\end{assumption}

\begin{assumption}[kNN regime and coverage]
As $t\to\infty$, the memory size $N(t)\to\infty$, and the neighborhood size
$k=k(N)$ satisfies
\[
k\to\infty, \qquad k/N \to 0.
\]
The state distribution has support covering the query region.
\end{assumption}

\begin{assumption}[Action frequency]
For any fixed $(s,a)$, the action-filtered count
\[
k_a := |N(s,a)|
\],
satisfies $k_a \to \infty$ in probability.
\end{assumption}

\begin{assumption}[Slow policy drift]
Define the neighborhood policy drift at time $t$:
\[
\Delta_t :=
\max_{i\in N(s)}
\sup_{s'}
\mathrm{TV}\!\left(\pi_{t(i)}(\cdot\mid s'),\,
\pi_t(\cdot\mid s')\right).
\]
Assume $\Delta_t \to 0$ as $t\to\infty$.
\end{assumption}

\begin{assumption}[Policy regularity]
For any policies $\pi,\pi'$,
\begin{align*}
|Q^\pi(s,a) - Q^{\pi'}(s,a)|
&\le L_\pi \sup_{s'} \mathrm{TV}(\pi(\cdot\mid s'),\pi'(\cdot\mid s')),\\
|V^\pi(s) - V^{\pi'}(s)|
&\le L_\pi \sup_{s'} \mathrm{TV}(\pi(\cdot\mid s'),\pi'(\cdot\mid s')) .
\end{align*}
\end{assumption}

\begin{theorem}[Tracking consistency of $\widehat V_t$, $\widehat Q_t$, and $\widehat A_t$]
Under the above assumptions, for any fixed query state $s$ and action $a$ at time
$t$,
\[
\widehat V_t(s) \xrightarrow{p} V^{\pi_t}(s), \qquad
\widehat Q_t(s,a) \xrightarrow{p} Q^{\pi_t}(s,a), \qquad
\widehat A_t(s,a) \xrightarrow{p} A^{\pi_t}(s,a),
\]
as $t\to\infty$.
\end{theorem}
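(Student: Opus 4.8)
The plan is to prove the three convergence statements in the order $\widehat V_t \to V^{\pi_t}$, then $\widehat Q_t \to Q^{\pi_t}$, and finally $\widehat A_t \to A^{\pi_t}$ by the continuous mapping theorem (since $\widehat A_t = \widehat Q_t - \widehat V_t$ and $A^{\pi_t} = Q^{\pi_t} - V^{\pi_t}$). For the value estimate, I would start from the decomposition
\[
\widehat V_t(s) - V^{\pi_t}(s) = \underbrace{\frac{1}{k}\sum_{i\in\mathcal N(s)}\epsilon_i}_{\text{(noise)}} + \underbrace{\frac{1}{k}\sum_{i\in\mathcal N(s)}\big(Q^{\pi_{t(i)}}(S_i,A_i) - V^{\pi_{t(i)}}(S_i)\big)}_{\text{(?)}} + \cdots
\]
Actually the cleaner route is to write $\widehat V_t(s) = \frac1k\sum_{i\in\mathcal N(s)} G_i$ and insert $G_i = Q^{\pi_{t(i)}}(S_i,A_i)+\epsilon_i$, then split the error against $V^{\pi_t}(s)$ into four pieces: (i) the averaged noise $\frac1k\sum\epsilon_i$, which is mean-zero with variance $\le\sigma^2/k\to 0$ by Assumption on the kNN regime, giving convergence in probability via Chebyshev; (ii) the state-approximation error, bounded by $L_V\max_{i\in\mathcal N(s)}d(s,S_i)$, which vanishes in probability because $k/N\to 0$ with coverage forces the kNN radius to shrink; (iii) the policy-drift error $|V^{\pi_{t(i)}}(S_i) - V^{\pi_t}(S_i)|\le L_\pi\Delta_t\to 0$ by Assumptions on slow drift and policy regularity; and (iv) the gap between averaging $Q^{\pi_{t(i)}}(S_i,A_i)$ over the neighborhood and $V^{\pi_{t(i)}}(S_i)$ — this requires that the actions $A_i$ in memory were drawn from $\pi_{t(i)}(\cdot\mid S_i)$, so that conditionally $\E[Q^{\pi_{t(i)}}(S_i,A_i)\mid S_i] = V^{\pi_{t(i)}}(S_i)$, and then a law-of-large-numbers argument (again Chebyshev, using boundedness of $Q$) controls the fluctuation.

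For $\widehat Q_t(s,a) = \frac{1}{k_a}\sum_{j\in\mathcal N(s,a)} G_j$, the argument is parallel but simpler because conditioning on $A_j=a$ removes the action-averaging step: insert $G_j = Q^{\pi_{t(j)}}(S_j,a)+\epsilon_j$, then control the averaged noise by $\sigma^2/k_a\to 0$ (using Assumption on action frequency, $k_a\to\infty$ in probability), the state-Lipschitz term by $L_Q\max_j d(s,S_j)\to 0$, and the policy-drift term by $L_\pi\Delta_t\to 0$. One subtlety here is that $k_a$ is itself random, so I would condition on $\{k_a = m\}$, prove the bound uniformly in $m$ large, and then use $k_a\to\infty$ in probability to conclude; alternatively invoke a Slutsky-type argument on the event $\{k_a\ge M\}$ and let $M\to\infty$. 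The unseen-action / exploration branch of $\widehat Q$ (Eq.~\eqref{eq:exploration}) does not need to be handled, since for a fixed $(s,a)$ Assumption on action frequency guarantees we are eventually in the "known action" regime with probability tending to one.

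Finally, $\widehat A_t(s,a) = \widehat Q_t(s,a) - \widehat V_t(s) \xrightarrow{p} Q^{\pi_t}(s,a) - V^{\pi_t}(s) = A^{\pi_t}(s,a)$ follows immediately from the first two convergences and the fact that $(x,y)\mapsto x-y$ is continuous, so there is nothing new to do in the third step beyond citing the continuous mapping theorem. The main obstacle is step (iv) above: making rigorous the claim that averaging $Q^{\pi_{t(i)}}(S_i,A_i)$ over the kNN neighborhood recovers $V^{\pi_t}(s)$. This quietly requires three things to interact correctly — that the retrieved $S_i$ concentrate near $s$ (kNN consistency), that the actions were generated on-policy so the conditional mean of $Q$ given the state is $V$, and that the neighborhood is large enough for a LLN while the policies indexing those entries have drifted negligibly. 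I would isolate this as a lemma: conditional on the neighborhood states, $\frac1k\sum_i Q^{\pi_{t(i)}}(S_i,A_i)$ has conditional mean $\frac1k\sum_i V^{\pi_{t(i)}}(S_i)$ and conditional variance $O(1/k)$ (by boundedness of returns), after which the remaining pieces are the routine Lipschitz and drift bounds already catalogued. The other mild technical point worth flagging is the randomness of $k$ and $k_a$, handled by conditioning as described; everything else is a finite sum of terms each shown to be $o_p(1)$.
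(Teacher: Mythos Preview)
Your proposal is correct and uses the same three-term decomposition as the paper: averaged noise (variance $\le\sigma^2/k$ or $\sigma^2/k_a$, controlled by Chebyshev), state mismatch (Lipschitz in $s$ plus shrinking kNN radius), and policy drift ($L_\pi\Delta_t\to 0$). The paper proves the $\widehat Q_t$ case explicitly, asserts that $\widehat V_t$ ``follows identically,'' and concludes $\widehat A_t$ by Slutsky.

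The one substantive difference is your step (iv), and here you are more careful than the paper. The noise model centers $G_i$ on $Q^{\pi_{t(i)}}(S_i,A_i)$, not on $V^{\pi_{t(i)}}(S_i)$; so for $\widehat V_t$ the argument does \emph{not} follow identically from the $Q$ case --- one must additionally argue that the neighborhood average of $Q^{\pi_{t(i)}}(S_i,A_i)$ over the random actions $A_i$ concentrates around $\frac{1}{k}\sum_i V^{\pi_{t(i)}}(S_i)$, which requires $A_i\sim\pi_{t(i)}(\cdot\mid S_i)$ and a conditional LLN with bounded $Q$. The paper glosses over this; you correctly isolate it as the main technical point and sketch the right lemma. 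Your observations that the exploration branch is irrelevant (since $k_a\to\infty$ in probability eventually puts us in the known-action case) and that the randomness of $k_a$ is handled by conditioning on $\{k_a\ge M\}$ are likewise sound refinements absent from the paper's proof.
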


\begin{proof}

Recall
\[
\widehat Q_t(s,a) := \frac{1}{k_a}\sum_{i\in N(s,a)} G_i .
\]
Substituting the noise model,
\[
\widehat Q_t(s,a) - Q^{\pi_t}(s,a)
= \frac{1}{k_a}\sum_{i\in N(s,a)}
\big(Q^{\pi_{t(i)}}(S_i,a) - Q^{\pi_t}(s,a)\big)
+ \frac{1}{k_a}\sum_{i\in N(s,a)} \epsilon_i .
\]

We decompose the first term as
\[
Q^{\pi_{t(i)}}(S_i,a) - Q^{\pi_t}(s,a)
=
\underbrace{Q^{\pi_{t(i)}}(S_i,a) - Q^{\pi_{t(i)}}(s,a)}_{\text{state mismatch}}
+
\underbrace{Q^{\pi_{t(i)}}(s,a) - Q^{\pi_t}(s,a)}_{\text{policy drift}} .
\]

\paragraph{State mismatch.}
By state regularity,
\[
|Q^{\pi_{t(i)}}(S_i,a) - Q^{\pi_{t(i)}}(s,a)| \le L_Q\, d(S_i,s).
\]
Under the kNN regime, the average neighbor distance converges to zero in
probability, so this term vanishes.

\paragraph{Policy drift.}
By policy regularity,
\[
|Q^{\pi_{t(i)}}(s,a) - Q^{\pi_t}(s,a)| \le L_\pi \Delta_t \xrightarrow{} 0 .
\]

\paragraph{Variance.}
Conditioned on $\{(S_i,A_i)\}$, the noise average has mean zero and variance at
most $\sigma^2/k_a$, which converges to zero since $k_a\to\infty$.

Combining the three terms yields
$\widehat Q_t(s,a) \xrightarrow{p} Q^{\pi_t}(s,a)$.
The result for $\widehat V_t$ follows identically, and
$\widehat A_t = \widehat Q_t - \widehat V_t$ converges as the difference of convergent estimators is convergent, by Slutsky's theorem.
\end{proof}

\section{Consistency of Policy Update} \label{app:cons_policy}

Finally, combining the previous theorems, we conclude that our policy update converges to the KL regularized policy defined by the true advantage values.

\begin{theorem}[Consistency of the closed-form policy update]
\label{cor:policy_consistency}
Fix a state $s$. Define
\begin{align*}
\pi_t^*(a\mid s)\propto \pi_\theta(a\mid s)\exp\!\big(\beta A^{\pi_t}(s,a)\big),\\
\widehat\pi_t(a\mid s)\propto \pi_\theta(a\mid s)\exp\!\big(\beta \widehat A_t(s,a)\big).
\end{align*}
Under the assumptions of Theorem~\ref{thm:tracking_QV},
as $t\to\infty$,
\[
\widehat\pi_t(\cdot\mid s) \xrightarrow[]{p} \pi_t^*(\cdot\mid s),
\]
for any finite candidate action set (or more generally, under uniform convergence of $\widehat A_t(s,\cdot)$ on the candidate set).
\end{theorem}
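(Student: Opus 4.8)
The statement is a continuity/continuous-mapping argument: the map sending an advantage vector $A(s,\cdot)$ to the induced Gibbs-tilted policy $\pi(\cdot\mid s)\propto\pi_\theta(\cdot\mid s)\exp(\beta A(s,\cdot))$ is continuous on the relevant domain, so convergence in probability of $\widehat A_t(s,\cdot)$ (Theorem~\ref{thm:tracking_QV}) transfers to convergence in probability of $\widehat\pi_t(\cdot\mid s)$. The plan is to make this precise for a finite candidate action set $\mathcal{C}=\{a_1,\dots,a_m\}$, then remark on the general case. First I would fix the state $s$ and view both $\pi_t^*(\cdot\mid s)$ and $\widehat\pi_t(\cdot\mid s)$ as points in the probability simplex $\Delta^{m-1}\subset\mathbb{R}^m$, obtained by applying the same deterministic function
\[
\Phi(v)_j \;=\; \frac{\pi_\theta(a_j\mid s)\exp(\beta v_j)}{\sum_{\ell=1}^m \pi_\theta(a_\ell\mid s)\exp(\beta v_\ell)}
\]
to the vectors $v^*_t = \big(A^{\pi_t}(s,a_1),\dots,A^{\pi_t}(s,a_m)\big)$ and $\widehat v_t = \big(\widehat A_t(s,a_1),\dots,\widehat A_t(s,a_m)\big)$ respectively. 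Since $\pi_\theta(a_j\mid s)>0$ for each candidate (logits are finite), $\Phi$ is a composition of the affine map $v\mapsto \beta v + \log\pi_\theta(\cdot\mid s)$ and the softmax, hence smooth, and in particular continuous, on all of $\mathbb{R}^m$.

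Next I would invoke Theorem~\ref{thm:tracking_QV} coordinatewise: for each $j$, $\widehat A_t(s,a_j)\xrightarrow{p} A^{\pi_t}(s,a_j)$, equivalently $\widehat A_t(s,a_j) - A^{\pi_t}(s,a_j)\xrightarrow{p}0$. Since a finite collection of sequences each converging to zero in probability also converges jointly (the max of finitely many quantities each $\to 0$ in probability is $\to 0$ in probability), we get $\|\widehat v_t - v^*_t\|\xrightarrow{p}0$. The subtlety is that the ``target'' $v^*_t$ is itself moving with $t$, so I cannot cite the ordinary continuous mapping theorem verbatim; instead I would use the following quantitative continuity argument. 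On any compact set $K\subset\mathbb{R}^m$, $\Phi$ is Lipschitz (its Jacobian is bounded on $K$ since softmax has derivatives bounded by $\beta$ in operator norm, uniformly). The advantage targets $v^*_t$ are bounded uniformly in $t$: indeed $|A^{\pi_t}(s,a_j)| = |Q^{\pi_t}(s,a_j) - V^{\pi_t}(s)|$, and under the stated assumptions (bounded rewards, hence bounded $Q$ and $V$ for every policy) there is a constant $B$ with $\|v^*_t\|_\infty\le B$ for all $t$; consequently, with probability tending to one, $\widehat v_t$ also lies in the compact box $K = [-B-1, B+1]^m$. On that box $\Phi$ is $L$-Lipschitz for some finite $L=L(\beta,B,\pi_\theta(\cdot\mid s))$, so on the event $\{\widehat v_t\in K\}$ we have $\|\widehat\pi_t(\cdot\mid s) - \pi_t^*(\cdot\mid s)\| = \|\Phi(\widehat v_t)-\Phi(v^*_t)\| \le L\|\widehat v_t - v^*_t\|$. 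Combining with $\Pr(\widehat v_t\notin K)\to 0$ and $\|\widehat v_t - v^*_t\|\xrightarrow{p}0$ gives, for every $\varepsilon>0$, $\Pr\big(\|\widehat\pi_t(\cdot\mid s)-\pi_t^*(\cdot\mid s)\|>\varepsilon\big)\le \Pr(\widehat v_t\notin K) + \Pr\big(L\|\widehat v_t - v^*_t\|>\varepsilon\big)\to 0$, which is the claim.

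For the general (infinite candidate set) case I would simply note that the same Lipschitz bound holds with the $\ell_\infty$ norm on advantages and total-variation distance on policies: if $\sup_a|\widehat A_t(s,a) - A^{\pi_t}(s,a)|\xrightarrow{p}0$ (the stated uniform-convergence hypothesis) and the advantages stay uniformly bounded, then $\|\widehat\pi_t(\cdot\mid s)-\pi_t^*(\cdot\mid s)\|_{\mathrm{TV}}\le L\sup_a|\widehat A_t(s,a)-A^{\pi_t}(s,a)|\xrightarrow{p}0$ by the same argument, since softmax-type maps are Lipschitz from $\ell_\infty$ to TV on bounded sets. I expect the main obstacle to be the moving target $v^*_t$: one must be careful that ``$\widehat v_t$ close to $v^*_t$ in probability'' plus ``$\Phi$ continuous'' really does yield ``$\Phi(\widehat v_t)$ close to $\Phi(v^*_t)$'', which requires \emph{uniform} (Lipschitz) continuity on a common compact set rather than mere pointwise continuity — and hence requires the a priori uniform boundedness of the advantage functions, which in turn should be recorded explicitly as following from the bounded-reward assumption underlying Appendix~\ref{app:consistency}.
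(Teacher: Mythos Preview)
Your proposal is correct and follows the same continuous-mapping strategy as the paper: both define the softmax-type map $\Phi$ and transfer convergence of $\widehat A_t(s,\cdot)$ to convergence of the induced policies. The paper's proof simply notes that $\Phi$ is continuous (composition of continuous functions, strictly positive denominator) and invokes the continuous mapping theorem in one line. You go further by flagging the moving-target subtlety --- the ``limit'' $A^{\pi_t}(s,\cdot)$ itself varies with $t$, so the textbook continuous mapping theorem (which assumes a fixed limit) does not literally apply --- and you resolve it via a Lipschitz bound for $\Phi$ on a compact box, with the box justified by uniform boundedness of advantages under bounded rewards. This extra care is warranted: the paper's one-line appeal to continuous mapping is informal for a nonstationary target, and your Lipschitz-on-compacta argument is the standard way to make it rigorous. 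Your handling of the infinite-action case via an $\ell_\infty\to\mathrm{TV}$ Lipschitz property is also more explicit than the paper, which merely records the uniform-convergence hypothesis parenthetically without spelling out the transfer.
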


\begin{proof}
For a fixed state $s$, define the mapping $\Phi:\mathbb{R}^{|\mathcal{A}|}\to\Delta(\mathcal{A})$ by
\[
\Phi(x)(a)
=
\frac{\pi_\theta(a\mid s)\exp(\beta x(a))}
{\sum_{a'\in\mathcal{A}} \pi_\theta(a'\mid s)\exp(\beta x(a'))}.
\]
When $\mathcal{A}$ is finite, $\Phi$ is continuous everywhere since it is a composition of continuous functions and the
denominator is strictly positive. By Theorem~\ref{thm:tracking_QV}, the vector $\widehat A_t(s,\cdot)$ converges in probability
to $A^{\pi_t}(s,\cdot)$ componentwise, hence also as a vector in $\mathbb{R}^{|\mathcal{A}|}$. Therefore, by the continuous
mapping theorem,
\[
\widehat\pi_t(\cdot\mid s)
=
\Phi\!\big(\widehat A_t(s,\cdot)\big)
\xrightarrow{p}
\Phi\!\big(A^{\pi_t}(s,\cdot)\big)
=
\pi_t^*(\cdot\mid s),
\]
which proves the claim.
\end{proof}

\section{Baselines}
\label{app:baselines}
\textbf{Static.} As a non-adaptive reference, this agent utilizes a fixed configuration throughout the entire evaluation. It relies on a consistent, manually-engineered prompt and predefined hyperparameters, maintaining a frozen state with no inter-episode updates or memory retention. Its primary role is to establish a performance floor, representing the backbone LLM's zero-shot capability. Consequently, any performance fluctuations are purely a byproduct of the model's inherent decoding stochasticity.

\textbf{Memory.} This approach leverages In-Context Learning (ICL) by providing the model with the complete historical record of the session. Following each episode, the entire interaction transcript is concatenated to a persistent log, which is then fed into the model's context window for the next iteration. While this allows the agent to potentially induce strategies from past experiences, it is fundamentally constrained by the LLM's context length; once this limit is reached, older information is discarded via a first-in-first-out (FIFO) truncation mechanism.

\textbf{Reflexion.} \citep{reflexion} Representing a linguistic paradigm of reinforcement learning, this method introduces an explicit self-correction loop. At the end of each episode, the agent critiques its own trajectory to diagnose failures and propose concrete behavioral adjustments (e.g., identifying a missing prerequisite like a 'key' before attempting to 'open a door'). These insights are distilled into a concise textual summary that is prepended to the subsequent prompt, thereby accumulating a strategic memory that guides future reasoning.

\textbf{Evotest.} \citep{evotest} An evolutionary test-time learning framework designed to optimize the entire agentic system without gradient-based fine-tuning. Unlike previous methods that focus on isolated memory or textual feedback, EvoTest employs a dual-role architecture: an \textit{Actor Agent} for task execution and an \textit{Evolver Agent} for system-level refinement. After each episode, the Evolver analyzes the full transcript to iteratively evolve the agent’s configuration for the subsequent run. This evolution encompasses a multi-faceted optimization: it systematically rewrites the system prompt, updates a persistent memory of effective state-action pairs, dynamically tunes hyperparameters, and refines tool-use routines. By treating the agent’s entire operational logic as an evolvable artifact, EvoTest can capture complex cross-episode strategies that remain inaccessible to simple reflection or raw memory concatenation.

\textbf{AWM.} \citep{awm} A framework designed to bridge the gap in long-horizon task execution by inducing and reusing structured task workflows. Unlike raw transcript-based memory, AWM focuses on extracting common routines—abstracted sequences of successful actions—from prior experiences. These workflows serve as high-level procedural guidance for the agent in subsequent episodes. AWM is capable of operating in both offline and online modes: in the latter, the agent dynamically induces workflows from test-time queries on the fly. By filtering and providing only the most relevant workflows for a given task, AWM enables the agent to navigate complex action trajectories more efficiently, significantly reducing the number of steps required for success and improving robustness across domain shifts.

For Jericho games, the training details of GRPO are:

\textbf{GRPO (Online).} \citep{grpo} This baseline implements a gradient-centric Reinforcement Learning (RL) strategy to iteratively refine the agent's policy. Unlike methods that rely on static prompts, GRPO utilizes state-action trajectories and their corresponding rewards to derive policy gradients after each episode. By updating the underlying model weights, it strengthens the probability of high-reward behaviors while penalizing suboptimal choices. While this approach provides superior credit assignment compared to Supervised Fine-Tuning (SFT), its performance is contingent upon the availability of dense and high-fidelity reward signals. Furthermore, the reliance on backpropagation makes it computationally demanding, requiring substantial GPU overhead for online parameter updates.

Regarding the \textbf{WebArena} benchmark, for the \textbf{SFT} baseline, we report the original results documented by \citet{webrl} as their model checkpoints were not publicly released. For \textbf{WebRL}, however, we conducted an independent evaluation by re-running the official checkpoints provided by the authors to ensure a consistent comparison within our experimental environment.

\textbf{SFT.} The base models are optimized using a dataset of approximately 1k human-labeled expert demonstrations. This process employs a standard imitation learning objective to maximize the likelihood of expert actions given the corresponding web observations. The resulting SFT model serves as the foundation for subsequent stages, specifically providing the initial weights for the Actor policy and the backbone for the Critic network, the latter of which is augmented with a randomly initialized value head.

\textbf{WebRL.} \citep{webrl} This method is a self-evolving online curriculum reinforcement learning framework designed to address the challenges of task scarcity and sparse feedback in web environments. The training process is centered around a self-evolving curriculum where an evolver agent dynamically generates new training instructions based on previous unsuccessful attempts, which are then filtered by a critic to ensure the tasks match the agent's current proficiency. To provide feedback in the absence of environmental signals, a trained Outcome-supervised Reward Model (ORM) evaluates the final state of trajectories to provide binary success rewards. The policy is updated using an off-policy reinforcement learning algorithm with a KL-divergence constraint to prevent distribution drift during online updates:
\begin{equation}
    \pi^* = \arg \max_{\pi'} \left( \mathbb{E}_{a \sim \pi'} [\hat{A}(s, a)] - \frac{1}{\beta} D_{KL}(\pi' || \pi_\theta) \right),
\end{equation}
where $\beta$ controls the strength of the constraint between the active policy $\pi'$ and the reference policy $\pi_\theta$. Additionally, the framework incorporates an experience replay buffer with actor confidence filtering to reuse high-fidelity successful trajectories and mitigate catastrophic forgetting of prior knowledge.

\section{Implementation Details}
\label{app:implementation}

We introduce the details of the implementation of JitRL:

\paragraph{State Representation.}
Efficient retrieval relies on compressing raw observations into representations that capture structural similarity while ignoring irrelevant noise. We design task-specific strategies:

\begin{itemize}[leftmargin=*,nosep]
    \item \textbf{WebArena}: 
      In the WebArena environment, raw observations typically consist of full HTML DOM trees and screenshots. While information-rich, these modalities are overly dense and noisy for efficient retrieval. To address this, we design a lightweight representation. 
    First, we use the Regularized URL as a structural proxy. We replace concrete content identifiers (e.g., product names, unique IDs) with generic placeholders. This ensures that different instances of the same page type (e.g., \texttt{.../customer/edit/123} and \texttt{.../customer/edit/456}) map to a unified ``generic view,'' enabling cross-instance transfer. 
    Second, since URLs are static, we augment them with the \textbf{local action history} (e.g., text typed into a search bar) to capture client-side state changes that occur before page reloads.

    \item \textbf{Jericho}: 
    We compress text observations into structured summaries: \texttt{Step $t$: [State: nouns...] [Action: verbs...]}. This retains key entities while discarding stylistic flavor text. We further maintain a hierarchical context containing: (1) a global \texttt{[SUMMARY]} of objectives, (2) a \texttt{[PROGRESS]} list of achieved milestones, and (3) a pruned \texttt{[LOCATION]} trajectory that removes unproductive loops.
\end{itemize}

\textbf{Augmented Candidate Set.} 
To prevent the model from overlooking historically effective actions, we construct an augmented candidate set $\mathcal{C}$. This set merges the LLM's top-$k$ predicted actions ($\mathcal{C}_{\text{LLM}}$) with the unique actions found in the retrieved neighborhood $\mathcal{N}(s)$:
\begin{equation}
\mathcal{C} \leftarrow \mathcal{C}_{LLM} \cup \{ a_i : (s_i, a_i, G_i) \in \mathcal{N}(s) \}.
    \label{eq:candidate_augmentation}
\end{equation}
For any retrieved action $a \notin \mathcal{C}_{\text{LLM}}$, we initialize its base logit to a neutral value $z(s,a) = 0$.

\textbf{Retrieval Similarity Metric.} 
We use text-based state representations. To quantify the similarity between the current state $s$ and a memory state $s_i$, we tokenize both representations and compute the \textbf{Jaccard Similarity}:
\begin{equation}
    J(s, s_i) = \frac{|T(s) \cap T(s_i)|}{|T(s) \cup T(s_i)|},
\end{equation}
where $T(\cdot)$ denotes the set of tokens derived from a state. 

\paragraph{Memory Retrieval.}
Based on the compressed representations, we employ retrieval strategies designed to balance semantic relevance and structural exactness:

\begin{itemize}[leftmargin=*,nosep]
    \item \textbf{WebArena}: 
    We use a two-stage hierarchical matching. We first filter the memory bank by the Regularized URL to locate entries from the same page type. Within this subset, we compute the similarity of the effective state (defined by current filters or input fields) using Jaccard similarity on tokenized representations. This ensures that retrieved experiences share both the same site structure and interactive context.

    \item \textbf{Jericho}: 
    We employ hybrid semantic retrieval using dual vector indexes: a \textit{state index} (current description) and a \textit{history index} (trajectory context). The final retrieval score is a weighted sum (0.75 state + 0.25 history), further refined by Jaccard similarity filtering to ensure sufficient lexical overlap with the query.
\end{itemize}

\paragraph{Action Handling.}
To ensure that actions retrieved from history are applicable to the current state, we perform normalization and valid-set constraint:

\begin{itemize}[leftmargin=*,nosep]
    \item \textbf{WebArena}: 
    Raw actions often contain ephemeral element IDs (e.g., \texttt{click('1240')}) that change across sessions. We apply semantic normalization by mapping these IDs to their accessibility tree descriptions (e.g., \texttt{click(<combobox[Sort by:]>)}). This converts instance-specific commands into functional semantics recognizable across episodes.

    \item \textbf{Jericho}: 
    To prevent "invalid command" errors common in text games, we explicitly constrain the LLM's output space to the set of valid actions provided by the game engine at each step, ensuring executable policy outputs.
\end{itemize}

\textbf{Advantage Normalization.} To ensure numerical stability across different scales, we normalize the advantage:
\begin{equation}
    \tilde{A}(s,a)=\frac{A(s,a)}{\max_{a^{\prime}\in\mathcal{C}}|A(s,a^{\prime})| + \epsilon}.
    \label{eq:norm_advantage}
\end{equation}

\section{Prompt Templates}
\label{app:prompts}

We provide the complete prompt templates used in JitRL. These prompts are essential for (1) generating candidate actions, (2) evaluating step-wise rewards, and (3) extracting effective trajectory context for memory retrieval. We organize the prompts by task domain.

\subsection{WebArena Prompts}

\subsubsection{Action Generation Prompt}

The following prompt is used to generate candidate actions from the LLM. The agent receives the current web page state and browsing history, then proposes multiple action options with reasoning.

\begin{promptbox}[title=Action Generation System Prompt]
\small
You are an intelligent web agent that interacts with real web pages on behalf of the user. Your goal is to accurately follow the user's natural language instructions by selecting and executing appropriate web actions. Select promising actions based on the web state and memory of past interactions.

\textbf{User's instructions:} \texttt{\{task\_goal\}}

\textbf{Available Actions:} \texttt{\{action\_space\_description\}}

\textbf{Response Format:}
You need to think step-by-step, then provide N potential actions you can take as a web agent.

\textbf{IMPORTANT:} When analyzing the state and selecting actions, carefully avoid repeating ineffective patterns:
\begin{itemize}[leftmargin=*,nosep]
    \item If an action failed or gave no progress before, don't try it again in the same context
    \item Send message to the user once the instruction is fulfilled
\end{itemize}

\textbf{CRITICAL:} \texttt{send\_msg\_to\_user()} is a TASK-ENDING action -- once called, the task IMMEDIATELY TERMINATES.

You must respond with a JSON object containing:
\begin{itemize}[leftmargin=*,nosep]
    \item \texttt{option1}, \texttt{option2}, ...: Each option should be a single action command
    \item \texttt{option1\_reasoning}, \texttt{option2\_reasoning}, ...: Explain WHY you chose this action
    \item \texttt{reasoning}: Your overall reasoning about the current state
    \item \texttt{best\_action}: The number of the option you think is best
\end{itemize}
\end{promptbox}

\subsubsection{Step-wise Reward Evaluation Prompt}

After each episode, we use an LLM-based evaluator to assign step-wise rewards. This prompt instructs the evaluator to score each action based on whether it contributed to the task goal.

\begin{promptbox}[title=Step-wise Reward Evaluation System Prompt]
\small
You are evaluating web agent actions for a training database.

\textbf{EVALUATION PROCESS:}
\begin{enumerate}[leftmargin=*,nosep]
    \item Analyze: What happened after this action? What was the result?
    \item Determine: Was this action USEFUL or HARMFUL (or neither)?
    \item Assess: How CERTAIN are you? (certain / somewhat uncertain / very uncertain)
    \item Score: Based on usefulness AND certainty
\end{enumerate}

\textbf{SCORING GUIDE ($-3$ to $+3$):}

\textit{For USEFUL actions:}
\begin{itemize}[leftmargin=*,nosep]
    \item \textbf{+3}: Clearly useful AND you are certain
    \item \textbf{+2}: Useful but you are somewhat uncertain
    \item \textbf{+1}: Might be useful but you are very uncertain
\end{itemize}

\textit{For HARMFUL/USELESS actions:}
\begin{itemize}[leftmargin=*,nosep]
    \item \textbf{-3}: Clearly harmful/useless AND you are certain
    \item \textbf{-2}: Harmful/useless but you are somewhat uncertain
    \item \textbf{-1}: Might be harmful/useless but you are very uncertain
\end{itemize}

\textit{For NEUTRAL actions:}
\begin{itemize}[leftmargin=*,nosep]
    \item \textbf{0}: Cannot determine OR no real effect OR effect immediately undone
\end{itemize}

\textbf{MANDATORY FORMAT:}

\texttt{Result: [what happened]. \\Usefulness: [useful/harmful/neutral].\\ Certainty: [certain/somewhat \\uncertain/very uncertain].\\ Score: [score] - [repeat/avoid]}
\end{promptbox}

\subsubsection{Trajectory Context Extraction Prompt}

For memory retrieval, we extract effective trajectory context that identifies the current state. This prompt uses vision capabilities to analyze screenshots and determine which actions actually define the current state.

\begin{promptbox}[title=Trajectory Context Extraction System Prompt]
\small
You are an expert at extracting effective action sequences to identify unique page states.

\textbf{YOUR MISSION:}
We want to find similar page states from past browsing sessions to reuse successful actions. We already found pages with the SAME URL, but the same URL can have DIFFERENT STATES (e.g., different filter settings, form values, etc.).

Your job: Extract the MINIMAL action sequence that DEFINES the current page state. This action sequence will be used to match against historical states.

\textbf{WHAT TO KEEP:}
\begin{itemize}[leftmargin=*,nosep]
    \item Actions whose effects are VISIBLE in the final page
    \item Actions that DETERMINE what content appears (filters, dropdowns, form inputs)
\end{itemize}

\textbf{WHAT TO REMOVE:}
\begin{itemize}[leftmargin=*,nosep]
    \item Actions whose effects are NOT visible (overwritten/replaced)
    \item Actions that don't affect final state
\end{itemize}

\textbf{EXAMPLES:}

\textit{Example 1 -- Filter defines state:}\\
Actions: \texttt{click(Refresh)} $\rightarrow$ \texttt{select\_option(Show By, Year)} $\rightarrow$ \texttt{click(Refresh)}\\
Final page: Shows ``Show By: Year'' in dropdown\\
$\rightarrow$ Output: \texttt{select\_option(<combobox[Show By:]>, "Year")}

\textit{Example 2 -- Overwritten input:}\\
Actions: \texttt{fill(From, "2020")} $\rightarrow$ \texttt{fill(From, "2023")}\\
Final page: Shows ``From: 2023''\\
$\rightarrow$ Output: \texttt{fill(<textbox[From:]>, "2023")}

\textbf{OUTPUT FORMAT:}\\
Step 1: Reasoning -- Examine final page and check each action's visibility\\
Step 2: Final Answer -- [Action sequence separated by ``$\rightarrow$'', or ``No action'']
\end{promptbox}

\subsection{Jericho Prompts}

\subsubsection{Action Generation Prompt}

The following prompt is used to generate candidate actions in text-based games. The agent receives the game history, current state, and a list of valid actions from the game engine.

\begin{promptbox}[title=Jericho Action Generation System Prompt]
\small
You are an expert player aiming to complete a text-based adventure game. Points are given for making progress in the game. Select promising actions based on the game state and memory of past interactions.

\textbf{EXPLORATION PRIORITY}: When you arrive at a NEW location you haven't fully explored before, you MUST thoroughly explore it FIRST before leaving.

\textbf{CRITICAL CONSTRAINT}: When REFERENCE ACTIONS are provided, you MUST ONLY choose actions from that list. Any action not in the REFERENCE ACTIONS list is INVALID and will fail. Do NOT create custom actions.
\end{promptbox}

\begin{promptbox}[title=Jericho Action Generation User Prompt]
\small
\textbf{GAME HISTORY:}\\
\texttt{\{summary\}}\\
\texttt{\{memory\_text\}}

\textbf{RECENT STEPS:}\\
\texttt{\{recent\_history\}}

\textbf{CURRENT STATE:} \texttt{\{current\_state\}}

\textbf{TASK:}
\begin{enumerate}[leftmargin=*,nosep]
    \item Analyze your progress: What have you achieved? What's your next objective?
    \item \textbf{MANDATORY}: Check the REFERENCE ACTIONS list below -- you MUST ONLY select from this list
    \item Propose $N$ different actions with reasoning
    \item For EACH action, provide your confidence as an integer from 0 to 100
\end{enumerate}

\textbf{RESPONSE FORMAT (JSON):}
\begin{verbatim}
{
  "progress_analysis": "What you've achieved and current challenges",
  "next_objective": "Your next goal",
  "reasoning1": "Why this action makes sense",
  "option1": "action command",
  "confidence1": 50,
  ...
  "best_action": 1
}
\end{verbatim}

\textbf{IMPORTANT:}
\begin{itemize}[leftmargin=*,nosep]
    \item ALL actions MUST be selected EXACTLY from the REFERENCE ACTIONS list
    \item The sum of all confidence values MUST equal 100
    \item Pay attention to game hints and clues in state descriptions
    \item Don't repeat failed actions or create loops (e.g., north$\rightarrow$south$\rightarrow$north)
\end{itemize}

\textbf{REFERENCE ACTIONS:} \texttt{\{valid\_actions\}}
\end{promptbox}

\subsubsection{Step-wise Reward Evaluation Prompt}

After each episode, we evaluate step-wise rewards based on the game's score changes and action effectiveness.

\begin{promptbox}[title=Jericho Step-wise Reward Evaluation Prompt]
\small
You are scoring game actions to build training data for future gameplay.

\textbf{PURPOSE:} Rate each action based on its overall impact -- positive scores for actions worth repeating, negative for actions to avoid.

\textbf{SCORING RULES:}
\begin{itemize}[leftmargin=*,nosep]
    \item \textbf{Positive}: Action led to progress or useful discoveries
    \item \textbf{Negative}: Action wasted time, caused loops, or had no benefit
    \item \textbf{Magnitude}: Match the game's typical reward scale (calibrate based on rewards shown in trajectory)
    \item \textbf{Evaluation}: Judge by full consequence chain, not just immediate result
\end{itemize}

\textbf{ANALYSIS:} For each action, explain what happened and why future players should repeat or avoid it.

\textbf{JSON FORMAT:}
\begin{verbatim}
{
  "step_analysis": [
    {
      "step": 0,
      "action": "exact action taken",
      "detailed_reasoning": "What happened after this action
                             and its consequences? (80+ words)",
      "score": 5
    }
  ],
  "overall_assessment": "Key lessons: what worked and what didn't"
}
\end{verbatim}
\end{promptbox}

\subsubsection{State Summarization Prompt}

To compress game states for efficient retrieval, we extract key information from each step into a structured format.

\begin{promptbox}[title=Jericho State Summarization Prompt]
\small
You are an expert at summarizing game trajectories. Extract and format key information from each step in a structured way.

\textbf{TASK:} Analyze the game trajectory and provide a structured summary.

\textbf{FORMAT REQUIREMENTS:}
\begin{enumerate}[leftmargin=*,nosep]
    \item For each step, extract ALL key nouns/objects from the current state IN THE ORDER they appear
    \item Extract 1--2 action keywords
    \item Format as: \texttt{Step X: [State: noun1, noun2, ...] [Action: keyword1, keyword2]}
    \item For the LAST step, only output \texttt{[State: ...]}, no \texttt{[Action: ...]}
\end{enumerate}

\textbf{EXAMPLE FORMAT:}
\begin{verbatim}
Step 0: [State: dark room, stone walls, locked door, rusty key,
         torch, shadows] [Action: take torch]
Step 1: [State: illuminated room, wooden chest, golden lock,
         stone floor, treasure map] [Action: open chest]
Step 2: [State: treasure room, gold coins, silver jewelry,
         exit door, victory banner]
\end{verbatim}
\end{promptbox}

\subsubsection{History Summary Prompt}

For cross-episode learning, we generate hierarchical summaries that capture game progress, milestones, and location trajectories. This overall summary will help the generator to understand the current situation, and make better decisions.

\begin{promptbox}[title=Jericho History Summary Prompt]
\small
You are an expert at analyzing game trajectories and creating highly distinctive summaries.

\textbf{OUTPUT FORMAT:}

\texttt{[SUMMARY]}: Natural language summary describing:
\begin{itemize}[leftmargin=*,nosep]
    \item The game's objective or goal (inferred from trajectory)
    \item Current progress toward that goal
    \item Key accomplishments so far
    \item Current situation/status
\end{itemize}

\texttt{[PROGRESS]}: List milestones in format \texttt{$\checkmark$ M\#: <action>$\rightarrow$<key object/result>}
\begin{itemize}[leftmargin=*,nosep]
    \item ONLY record steps where the score increased
    \item If NO steps have score increases, output ``No Progress''
\end{itemize}

\texttt{[LOCATION]}: Track location changes in format \texttt{Location: A$\rightarrow$B$\rightarrow$C}
\begin{itemize}[leftmargin=*,nosep]
    \item Extract location names from state descriptions
    \item Only record when location actually changes
    \item CRITICAL: IGNORE unproductive loops -- if the player goes back and forth WITHOUT score increases, skip those movements
\end{itemize}

\textbf{EXAMPLES:}

\textit{Good Summary:} ``The game's objective is to escape the haunted mansion. So far, the player has found a key in the library and unlocked the basement door, earning 15 points. Currently in the dark basement, the player needs to find a light source.''

\textit{Good Progress:} \texttt{$\checkmark$ M1: enter$\rightarrow$library}

\textit{Good Location:} \texttt{Location: entrance$\rightarrow$library$\rightarrow$basement}\\
(Not: \texttt{room A$\rightarrow$room B$\rightarrow$room A$\rightarrow$room B} -- this is an unproductive loop)
\end{promptbox}

\section{Hyperparameters}
\label{app:hyperparameters}

We provide detailed hyperparameter settings for the Jericho experiments in Table~\ref{tab:jericho_hyperparameters}, and detailed hyperparameter settings for the WebArena experiments in Table~\ref{tab:webarena_hyperparameters}.

\begin{table}[h]
\centering
\caption{Hyperparameters for Jericho experiments.}
\vskip 0.1in
\label{tab:jericho_hyperparameters}
\begin{tabular}{lcc}
\toprule
\textbf{Category} & \textbf{Parameter} & \textbf{Value} \\
\midrule
\multicolumn{3}{l}{\textit{Environment Settings}} \\
\addlinespace[2pt]
& Step limit per episode & 60 \\
& Number of episodes & 50 \\
\midrule
\multicolumn{3}{l}{\textit{LLM Configuration}} \\
\addlinespace[2pt]
& Temperature & 0.8 \\
& Candidate actions ($|\mathcal{C}|$) & 3 \\
\midrule
\multicolumn{3}{l}{\textit{Value Estimation}} \\
\addlinespace[2pt]
& Discount factor ($\gamma$) & 0.5 \\
& Retrieval neighbors ($k$) & 10 \\
& Similarity threshold ($r$) & 0.95 \\
\midrule
\multicolumn{3}{l}{\textit{Exploration}} \\
\addlinespace[2pt]
& Exploration rate ($\lambda$) & 0.65 \\
& UCB bonus ($\alpha$) & 5  \\
\bottomrule
\end{tabular}
\end{table}

\begin{table}[h]
\centering
\caption{Hyperparameters for WebArena experiments.}
\label{tab:webarena_hyperparameters}
\vskip 0.1in
\begin{tabular}{lcc}
\toprule
\textbf{Category} & \textbf{Parameter} & \textbf{Value} \\
\midrule
\multicolumn{3}{l}{\textit{Environment Settings}} \\
\addlinespace[2pt]
& Step limit per episode & 10 \\
& Number of episodes & 50 \\
& Random seed & 0 \\
\midrule
\multicolumn{3}{l}{\textit{LLM Configuration}} \\
\addlinespace[2pt]
& Temperature & 0.8 \\
& Candidate actions ($|\mathcal{C}|$) & 3 \\
\midrule
\multicolumn{3}{l}{\textit{Value Estimation}} \\
\addlinespace[2pt]
& Discount factor ($\gamma$) & 0.1 \\
& Retrieval neighbors ($k$) & 10 \\
& Similarity threshold ($r$) & 0.8 \\
\midrule
\multicolumn{3}{l}{\textit{Memory Retrieval}} \\
\addlinespace[2pt]
& Task similarity threshold & 0.27 \\
& History weight & 0.7 \\
& Task weight & 0.3 \\
\midrule
\multicolumn{3}{l}{\textit{Exploration}} \\
\addlinespace[2pt]
& Exploration rate ($\lambda$) & 0.05 \\
& UCB bonus ($\alpha$) & 5  \\
\bottomrule
\end{tabular}
\end{table}

\section{Comparison of WebRL and JitRL under Identical Settings}
\label{app:fair_comparison}

\begin{table}[ht]                                                                                                                         
  \centering                                                                                                                                     
  \caption{Final success rate comparison (\%) between WebRL and JitRL on WebArena-Lite using the same base model and training data.}         
  \label{tab:identical}                                
  \vskip 0.1in                                                                                      
  \begin{tabular}{lcccccc}                                            
  \toprule                                                                                                                                      
  \textbf{Method} & \textbf{Admin} & \textbf{GitLab} & \textbf{Map} & \textbf{Reddit} & \textbf{Shopping} & \textbf{Avg} \\ \midrule             
  WebRL           & \textbf{38.89}          & 23.53           & 9.68         & 50.00           & 17.39             & 27.27        \\                      
  \textbf{JitRL } & 28.57 & \textbf{30.00}  & \textbf{20.00} & \textbf{53.33}  & \textbf{36.96}    & \textbf{32.97} \\ \bottomrule
  \end{tabular}                                                                                                                                  
  \end{table}    

To further isolate the impact of the proposed JitRL, we conducted a controlled experiment comparing JitRL with WebRL using the identical base model, \textbf{Llama 3.1 8B Instruct}, and evaluated both on-the-fly on the WebArena-Lite subset. To ensure a strictly fair comparison, we maintained identical experimental settings for both methods: for each task, the agent is granted five attempts. The trajectories from previous attempts are utilized by both models to improve subsequent performance: JitRL stores these trajectories in its \textit{dynamic memory} for inference-time retrieval, whereas WebRL uses them for \textit{online policy updates} via reinforcement learning. 

As summarized in Table~\ref{tab:identical}, JitRL outperforms WebRL in the majority of domains and achieves a higher overall average success rate. While both methods attempt to learn from experience during the test phase, the performance gap highlights the inherent limitations of weight-update-based RL in low-data, on-the-fly scenarios. We attribute WebRL's inferior performance to the fact that standard RL algorithms typically require a vast amount of environment interactions and high-quality samples to achieve stable convergence. In contrast, JitRL's inference-time optimization bypasses the instability of gradient-based updates in small-sample regimes, allowing for more immediate and robust adaptation to complex web environments.

\section{Additional Controlled Comparisons}
\label{app:controlled_comparisons}

\subsection{Controlled Comparison on WebArena with the Same Backbone}
\label{app:large_budget}

We conduct a controlled experiment using the same Llama-3.1-70B-Instruct backbone for all methods, following the same protocol as WebRL: both JitRL and WebRL use identical training tasks (WebArena tasks excluding WebArena-Lite) and are evaluated on the held-out WebArena-Lite (165 tasks) with no test-set adaptation. During training, JitRL collects experience memory without gradient updates; during evaluation, it retrieves from this pre-collected memory. As shown in Table~\ref{tab:large_budget}, JitRL substantially outperforms SFT and approaches WebRL's performance at a fraction of the cost (\$200 vs.\ \$9,900). We note that this setting — with a large offline training corpus and no test-time adaptation — is specifically favorable to weight-update methods; JitRL is primarily designed for the continual learning regime where such offline datasets are unavailable.

\begin{table}[h]
\centering
\caption{Controlled comparison on WebArena-Lite (Final success rate \%) using the same \texttt{Llama-3.1-70B-Instruct} backbone, with separate training and evaluation phases.}
\label{tab:large_budget}
\vskip 0.1in
\begin{tabular}{lccccccr}
\toprule
\textbf{Method} & \textbf{Admin} & \textbf{GitLab} & \textbf{Map} & \textbf{Reddit} & \textbf{Shopping} & \textbf{Average} & \textbf{Cost} \\
\midrule
Llama-3.1-70B   & 10.50          & 16.70           & 17.10        & 20.00           & 4.40              & 12.70            & --           \\
SFT             & 20.00          & 20.00           & 26.70        & 52.60           & 13.30             & 23.00            & \$640        \\
\textbf{JitRL}  & 47.22          & 38.24           & 25.81        & 58.33           & \textbf{34.78}    & 40.88            & \$200        \\
WebRL           & \textbf{58.33} & \textbf{47.06}  & \textbf{32.26} & \textbf{62.50} & 30.43             & \textbf{46.06}   & \$9{,}900    \\
\bottomrule
\end{tabular}
\end{table}

\subsection{Same-Backbone Comparison on Jericho}
\label{app:same_backbone_jericho}

We compare JitRL and GRPO using the same Qwen3-32B model on all three Jericho games. Both methods operate in the continual learning setting: GRPO performs online gradient updates after each episode, while JitRL accumulates experience memory. As shown in Table~\ref{tab:jericho_same_backbone}, JitRL outperforms GRPO across all three games, with the most pronounced difference observed on Zork1.

\begin{table}[h]
\centering
\caption{Same-backbone comparison on Jericho (Qwen3-32B, both methods adapt on the test set, Avg score over 50 episodes).}
\label{tab:jericho_same_backbone}
\vskip 0.1in
\begin{tabular}{lccc}
\toprule
\textbf{Method} & \textbf{Library} & \textbf{Zork1} & \textbf{Zork3} \\
\midrule
GRPO           & 13.6             & 16.2           & 1.1            \\
\textbf{JitRL} & \textbf{20.8}    & \textbf{42.1}  & \textbf{2.3}   \\
\bottomrule
\end{tabular}
\end{table}

\begin{table}[h]
\centering
\caption{Unified pipeline comparison on Jericho (Avg score over 50 episodes).}
\label{tab:unified_jericho}
\vskip 0.1in
\begin{tabular}{lccc}
\toprule
\textbf{Method} & \textbf{Library} & \textbf{Zork1} & \textbf{Zork3} \\
\midrule
EvoTest (best baseline) & 21.5          & 46.8           & 2.6            \\
JitRL (unified)         & 24.7          & 51.8           & 3.3            \\
\textbf{JitRL}          & \textbf{25.9} & \textbf{53.0}  & \textbf{3.1}   \\
\bottomrule
\end{tabular}
\end{table}

\begin{table}[h]
\centering
\caption{Unified pipeline comparison on WebArena (Avg success rate \%).}
\label{tab:unified_webarena}
\vskip 0.1in
\begin{tabular}{lcccccc}
\toprule
\textbf{Method} & \textbf{Admin} & \textbf{GitLab} & \textbf{Map} & \textbf{Reddit} & \textbf{Shopping} & \textbf{Average} \\
\midrule
EvoTest (best baseline) & 44.51          & 37.94           & 33.59        & 51.78           & 26.67             & 39.24            \\
JitRL (unified)         & 52.17          & 41.18           & 35.23        & 58.14           & 43.33             & 46.01            \\
\textbf{JitRL}          & \textbf{52.31} & \textbf{40.78}  & \textbf{37.66} & \textbf{57.64} & \textbf{41.67}    & \textbf{46.98}   \\
\bottomrule
\end{tabular}
\end{table}

\section{Case Studies on Jericho}
\label{app:case_jericho}
\begin{table*}[t]
    \centering
    \caption{\textbf{Qualitative Analysis of Policy Improvement on Jericho.} We select one representative case from each text-based game showing how JitRL modifies decision-making. \textbf{Base} and \textbf{JitRL} denote the logits before and after the memory-based update. The results highlight JitRL's ability to override exploratory defaults, learn game-specific mechanics, and prioritize high-reward actions.}
    \label{tab:jericho_case_study}
    \vspace{8pt}
    \small
    \renewcommand{\arraystretch}{0.9}

    \begin{tabularx}{\linewidth}{l l c c X}
        \toprule
        \textbf{State / Context} & \textbf{Candidate Action} & \textbf{Base} & \textbf{JitRL} & \textbf{Mechanism Explanation} \\
        \midrule

        Library: Lobby & \textcolor{red}{\textbf{examine desk}} & \textbf{0.85} & 0.35 & The base model defaults to exploratory actions. \\
        (Attendant present) & \textcolor{green!60!black}{\textbf{give ID to attendant}} & 0.45 & \textbf{1.65} & Memory encodes that this action immediately yields +5 points and unlocks the rare books room. \\

        \addlinespace[6pt]

        Zork1: Loud Room & \textcolor{red}{\textbf{take platinum bar}} & \textbf{0.92} & 0.42 & Greedy treasure-grabbing seems intuitive, but fails \\
        (Deafening noise) & \textcolor{green!60!black}{\textbf{echo}} & 0.30 & \textbf{1.50} & in the noisy room. JitRL learns that ``echo'' quiets the room, enabling treasure collection. \\

        \addlinespace[6pt]

        Zork3: Cliff & \textcolor{red}{\textbf{climb down}} & \textbf{0.90} & 0.30 & Direct descent without preparation leads to death. \\
        (Holding rope) & \textcolor{green!60!black}{\textbf{tie rope to railing}} & 0.40 & \textbf{1.60} & Memory encodes that securing the rope first enables safe descent and progression. \\

        \bottomrule

    \end{tabularx}
\end{table*}

Table~\ref{tab:jericho_case_study} presents analogous correction patterns in text-based games. We identify three key mechanisms. First, regarding \textbf{Exploration Override} (Library), the base model defaults to generic investigative actions (``examine desk''), but memory encodes that direct task-relevant interactions (``give ID to attendant'') yield immediate rewards. Second, for \textbf{Game Mechanics} (Zork1), JitRL learns non-obvious puzzle solutions---in the Loud Room, the intuitive ``take platinum bar'' fails due to deafening noise, while the counterintuitive ``echo'' command quiets the room and enables collection. Third, concerning \textbf{Safety-Critical Sequencing} (Zork3), the base model attempts a direct ``climb down'' at the cliff, but memory shows that ``tie rope to railing'' must precede descent to avoid fatal falls.

\section{Impact of State Representation}
\label{app:state_representation}
\begin{figure}[htbp]
    \centering
    \includegraphics[width=\columnwidth, page=1]{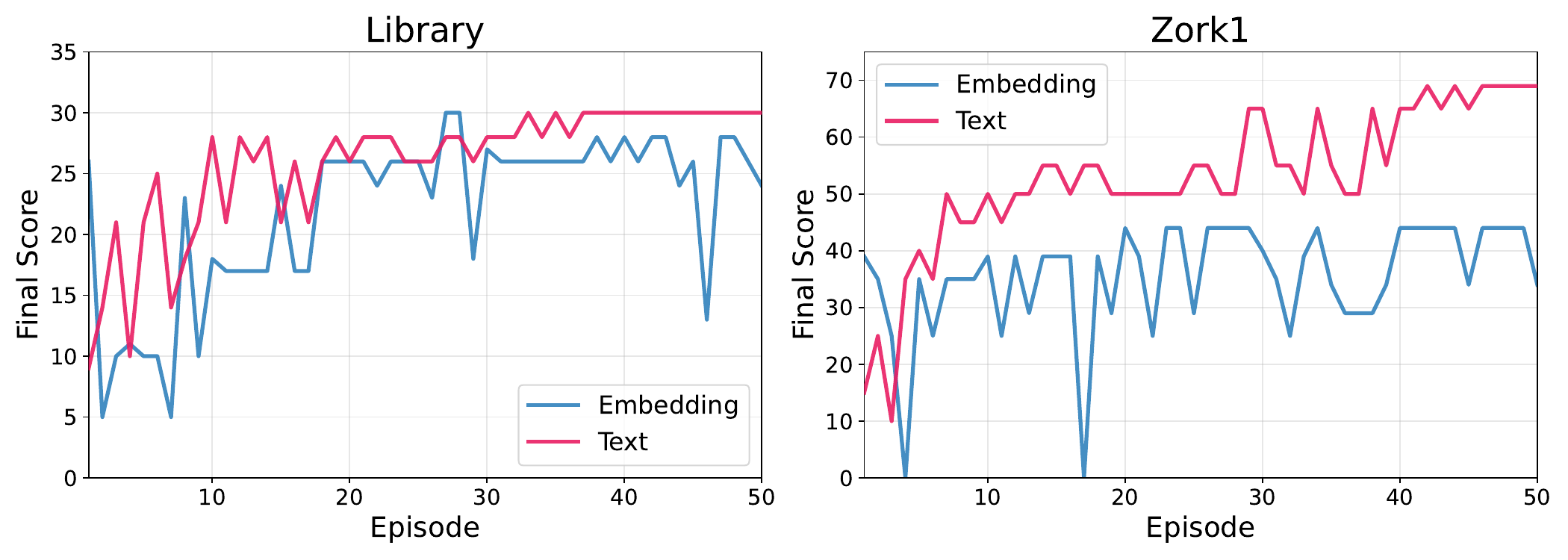}
    \vspace{-3mm}
    \caption{Performance Comparison of Text-based and Embedding-based State Representations.}
    \vspace{-3mm}
    \label{fig:embedding}
\end{figure}
We compare our structured text state summary (\textbf{Text}) against state embedding (\textbf{Embedding}) on Library and Zork1. Figure~\ref {fig:embedding} shows that the Text achieves superior stability and performance, whereas Embedding exhibits significant fluctuation. We attribute this to the low discriminative power of dense embeddings in text games; they often assign high similarity to topologically distinct but descriptively similar states. This ambiguity leads to the retrieval of mismatched memories, introducing noise that degrades policy improvement.

\section{Step Analysis}

\begin{table}[t]
  \caption{Average Steps per Website.}
  \vskip 0.1in
  \label{tab:avg_steps_per_site}
  \begin{center}
    \begin{small}
      \begin{tabular}{lcccccc}
        \toprule
        \textbf{Website} 
        & \textbf{GitLab} 
        & \textbf{Map} 
        & \textbf{Reddit} 
        & \textbf{Shopping} 
        & \textbf{Admin}
        & \textbf{Overall} \\
        \midrule
        \textbf{Avg Steps} 
        & 5.83 
        & 9.39 
        & 4.57 
        & 5.55 
        & 6.20
        & \textbf{6.28} \\
        \bottomrule
      \end{tabular}
    \end{small}
  \end{center}
  \vskip -0.1in
\end{table}

Table~\ref{tab:avg_steps_per_site} presents the average number of steps required by the agent to complete tasks across different website domains. The results reveal notable differences in task complexity among the domains. The Map website poses the greatest challenge, requiring an average of 9.39 steps per task, as map-based navigation typically involves sequential operations such as searching locations, adjusting views, and extracting information. In contrast, Reddit and Admin websites present relatively simpler navigation patterns, averaging only 4.57 and 5.55 steps, respectively, as these platforms often allow more direct access to target content.

\section{Ablation on Exploration Rate $\lambda$}
\label{app:ablation_lambda}

We ablate the exploration rate $\lambda$ on both Jericho (three games: Library, Zork1, Zork3; averaged over 50 episodes) and WebArena (five website domains; averaged over 5 episodes per task). We vary $\lambda \in \{0, 0.25, 0.50, 0.65, 0.75\}$ for Jericho and $\lambda \in \{0, 0.05, 0.25, 0.50, 0.75\}$ for WebArena, keeping all other hyperparameters fixed. As shown in Tables~\ref{tab:ablation_lambda_jericho} and~\ref{tab:ablation_lambda_webarena}, performance is stable across a reasonable range of both parameters. WebArena favors a smaller $\lambda$ since its structured action space requires less exploration, while Jericho benefits from a larger $\lambda$ due to its combinatorially large action space, where extensive exploration is essential to discover effective action sequences.

\begin{table}[h]
\centering
\caption{Ablation of $\lambda$ on Jericho (Avg score over 50 episodes).}
\label{tab:ablation_lambda_jericho}
\vskip 0.1in
\begin{tabular}{lccc}
\toprule
$\lambda$ & \textbf{Library} & \textbf{Zork1} & \textbf{Zork3} \\
\midrule
0    & 20.8 & 42.1 & 2.1 \\
0.25 & 24.1 & 48.6 & 2.6 \\
0.50 & 24.9 & \textbf{52.7} & \textbf{3.1} \\
\textbf{0.65} & \textbf{25.9} & 53.0 & \textbf{3.1} \\
0.75 & 25.7 & 51.4 & 2.9 \\
\bottomrule
\end{tabular}
\end{table}

\begin{table}[h]
\centering
\caption{Ablation of $\lambda$ on WebArena (Avg success rate \%, full).}
\label{tab:ablation_lambda_webarena}
\vskip 0.1in
\begin{tabular}{lcccccc}
\toprule
$\lambda$ & \textbf{Admin} & \textbf{GitLab} & \textbf{Map} & \textbf{Reddit} & \textbf{Shopping} & \textbf{Average} \\
\midrule
0    & 50.77 & 39.51 & 36.41 & 55.81 & 40.42 & 45.38 \\
\textbf{0.05} & 52.31 & 40.78 & \textbf{37.66} & 57.64 & \textbf{41.67} & \textbf{46.98} \\
0.25 & \textbf{52.75} & \textbf{41.18} & 37.19 & \textbf{57.89} & 41.25 & 46.52 \\
0.50 & 49.01 & 38.04 & 34.38 & 53.49 & 37.08 & 43.36 \\
0.75 & 46.15 & 36.27 & 32.81 & 51.94 & 35.42 & 41.32 \\
\bottomrule
\end{tabular}
\end{table}

\section{Sensitivity to UCB Bonus $\alpha$}
\label{app:ablation_alpha}

We ablate the UCB bonus $\alpha$ on both Jericho (three games: Library, Zork1, Zork3; averaged over 50 episodes) and WebArena (five website domains; averaged over 5 episodes per task), varying $\alpha \in \{3, 5, 7, 10\}$ while keeping all other hyperparameters fixed. As shown in Tables~\ref{tab:ablation_alpha_jericho} and~\ref{tab:ablation_alpha_webarena}, performance is stable across a reasonable range of $\alpha$, with $\alpha = 5$ achieving the best overall balance between exploration incentive and over-optimism.

\begin{table}[h]
\centering
\caption{Sensitivity of $\alpha$ on Jericho (Avg score over 50 episodes).}
\label{tab:ablation_alpha_jericho}
\vskip 0.1in
\begin{tabular}{lccc}
\toprule
$\alpha$ & \textbf{Library} & \textbf{Zork1} & \textbf{Zork3} \\
\midrule
3  & 23.8 & 50.5 & 2.7 \\
\textbf{5}  & \textbf{25.9} & 53.0 & \textbf{3.1} \\
7  & 24.6 & \textbf{54.2} & 2.8 \\
10 & 23.4 & 49.2 & 2.6 \\
\bottomrule
\end{tabular}
\end{table}

\begin{table}[h]
\centering
\caption{Sensitivity of $\alpha$ on WebArena (Avg success rate \%).}
\label{tab:ablation_alpha_webarena}
\vskip 0.1in
\begin{tabular}{lccccc}
\toprule
$\alpha$ & \textbf{Admin} & \textbf{GitLab} & \textbf{Map} & \textbf{Reddit} & \textbf{Shopping} \\
\midrule
3  & \textbf{53.48} & 39.80 & 35.78 & \textbf{58.45} & 40.52 \\
\textbf{5}  & 52.31 & \textbf{40.78} & \textbf{37.66} & 57.64 & \textbf{41.67} \\
7  & 49.02 & 39.22 & 34.50 & 55.04 & 38.54 \\
10 & 46.52 & 37.45 & 33.03 & 53.02 & 36.98 \\
\bottomrule
\end{tabular}
\end{table}

\section{Scalability of JitRL}
\label{app:scalability}

We evaluate JitRL's scalability on the Library game from Jericho over 50 episodes, measuring how retrieval latency and average game score evolve as the memory bank grows. We partition the memory into bins of 500 entries and report the average retrieval time (in milliseconds) and average score within each bin. As shown in Table~\ref{tab:scalability}, JitRL's performance continuously improves as memory grows, while retrieval overhead remains negligible compared to LLM inference time.

\begin{table}[h]
\centering
\caption{Inference overhead and performance as memory grows (Library, 50 episodes).}
\label{tab:scalability}
\vskip 0.1in
\begin{tabular}{lcc}
\toprule
\textbf{Memory Size (\# entries)} & \textbf{Retrieval (ms)} & \textbf{Avg Score} \\
\midrule
0--500    & 15--22 & 18.1 \\
500--1000  & 26--27 & 25.1 \\
1000--1500 & 29     & 27.2 \\
1500--2000 & 38     & 29.2 \\
2000--2500 & 47     & 30.0 \\
\bottomrule
\end{tabular}
\end{table}

\section{GRPO Hyperparameter Sweep and Comparison with JitRL}
\label{app:grpo_sweep}

\subsection{Clarification on Rollout Count vs.\ Batch Size}

In GRPO, each problem instance is rolled out $G$ times, and the advantage for each trajectory is computed via group-relative normalization over all trajectories sharing the same instance:
\begin{equation}
    \hat{A}_i = \frac{r_i - \mathrm{mean}(\{r_j\}_{j \in \text{group}})}{\mathrm{std}(\{r_j\}_{j \in \text{group}})}.
\end{equation}
Since all training data in our Jericho setting originates from the same initial game observation (analogous to a single prompt), increasing rollout count $G$ and increasing batch size both raise the total number of trajectories per gradient step, but differ in the scope of group normalization:

\begin{itemize}[leftmargin=*,nosep]
    \item \textbf{Increasing rollout count} (e.g., $G{=}64$, batch size${=}1$): All 64 trajectories share one group ID, so group-relative normalization is performed jointly over all 64 trajectories, yielding a smoother advantage estimate.
    \item \textbf{Increasing batch size} (e.g., $G{=}8$, batch size${=}8$): Each copy of the instance receives an independent group ID, creating 8 independent groups of 8 rollouts. Normalization is performed within each group of 8, making the advantage more sensitive to local variation.
\end{itemize}

\subsection{Experimental Setup}

We conduct experiments on Zork1 using Qwen3-32B (bf16) trained with GRPO for 50 training steps, matching the number of episodes used in JitRL evaluation. After each training step, we evaluate the model with a single rollout and record the score. We sweep four hyperparameters — rollout count $G$, batch size, PPO epochs, and learning rate — varying one axis at a time while fixing the others. All experiments are run on 4$\times$ NVIDIA H200 GPUs with tensor parallelism of 4, a maximum of 80 turns per episode, a maximum response length of 8,192 tokens, and KL coefficient 0.001 (low-variance KL).

\subsection{Results}

Table~\ref{tab:grpo_sweep} summarizes the validation mean, max, and min scores across all swept configurations, along with the total number of training episodes consumed by each setting.

\begin{table}[h]
\centering
\caption{GRPO hyperparameter sweep on Zork1 (Qwen3-32B, 50 training steps).}
\label{tab:grpo_sweep}
\vskip 0.1in
\begin{tabular}{ccccccc}
\toprule
\textbf{Rollout} & \textbf{Batch Size} & \textbf{PPO Epochs} & \textbf{LR} & \textbf{Val Mean} & \textbf{Val Max} & \textbf{Val Min} \\
\midrule
8  & 1 & 1 & 1e-6 & 16.2 & 35 & 0  \\
16 & 1 & 1 & 1e-6 & 13.6 & 40 & 0  \\
32 & 1 & 1 & 1e-6 & 12.0 & 40 & -5 \\
8  & 1 & 2 & 1e-6 & 10.4 & 40 & 0  \\
8  & 1 & 4 & 1e-6 & 17.1 & 40 & 5  \\
8  & 1 & 4 & 1e-5 & 25.5 & 45 & -5 \\
\textbf{8}  & \textbf{8} & \textbf{1} & \textbf{1e-5} & \textbf{40.7} & \textbf{55} & 5  \\
\bottomrule
\end{tabular}
\end{table}

\subsection{Analysis}

As shown in Table~\ref{tab:grpo_sweep}, the best GRPO configuration (rollout${=}8$, batch size${=}8$, PPO epochs${=}1$, lr${=}$1e-5) achieves a mean validation score of 40.7 and a maximum of 55 on Zork1, with a clear upward learning curve that stabilizes near 55 in the final 20 steps. This is competitive with JitRL's mean score of 53.0 using the same Qwen3-32B backbone.

However, this result comes at a substantial sample cost. Over 50 steps, JitRL requires exactly \textbf{50 trajectories} in total. In contrast, GRPO consumes $50 \times G \times \text{batch size}$ trajectories per run. The best GRPO configuration alone requires \textbf{3,200 training episodes} — over \textbf{64$\times$ more} than JitRL. Even the original GRPO setting reported in the main paper (rollout${=}8$, batch size${=}1$) requires 400 episodes, \textbf{8$\times$ more} than JitRL. While GRPO with extensive hyperparameter tuning can approach JitRL's performance, this comes at a substantially higher sample cost, further highlighting JitRL's efficiency advantage.

\section{Cost Analysis.}
\label{app:cost}
The training of \textbf{WEBRL}, based on the \texttt{Llama-3.1-70B-Instruct} backbone, requires approximately 154 hours of computation on a 16 $\times$ NVIDIA H200 GPU cluster (distributed across 2 nodes). This intensive online reinforcement learning process is structured into two primary stages: the initial Supervised Fine-Tuning (SFT) phase, which accounts for 10 hours to establish foundational web-navigation capabilities, followed by an iterative 8-phase Reinforcement Learning cycle. Within each RL phase, the Self-Evolving Task Generation and Filtering requires 8 hours (1 hour per phase), while the Online Rollout and Interaction phase for generating trajectories takes approximately 48 hours (6 hours per phase). The Reward Labeling (utilizing an 8B ORM) and the heavy Actor-Critic Optimization (managing 70B parameters with a 16,384 cutoff length) require 2 hours (15 mins per phase) and 86 hours (10.75 hours per phase), respectively. Based on a standard market rate of \$64 per hour for 2 nodes, the total estimated expenditure for a single complete 70B training cycle is approximately \$9,856.
The operational costs for JitRL and other training-free baselines are calculated based on token consumption through the OpenRouter API, which serves as the uniform billing standard for these inference-based methods.
\end{document}